
\documentclass{article}

\usepackage{microtype}
\usepackage{graphicx}
\usepackage{subfigure}
\usepackage{booktabs} 

\usepackage{hyperref}


\usepackage[accepted]{icml2023}


\usepackage{amsmath}
\usepackage{amssymb}
\usepackage{mathtools}
\usepackage{amsthm}

\usepackage[capitalize,noabbrev]{cleveref}

\theoremstyle{plain}
\newtheorem{theorem}{Theorem}[section]

\newtheorem{lemma}[theorem]{Lemma}
\newtheorem{corollary}[theorem]{Corollary}
\theoremstyle{definition}
\newtheorem{definition}[theorem]{Definition}

\theoremstyle{remark}
\newtheorem{remark}[theorem]{Remark}

\usepackage[textsize=tiny]{todonotes}


\usepackage{multirow}
\usepackage{enumitem}


\newcounter{mainsection}  

\theoremstyle{plain}
\newtheorem{maintheorem}{Theorem}[mainsection]
\newtheorem{maincorollary}[maintheorem]{Corollary}

\theoremstyle{definition}
\newtheorem{maindefinition}[maintheorem]{Definition}

\theoremstyle{remark}


\def\ddefloop#1{\ifx\ddefloop#1\else\ddef{#1}\expandafter\ddefloop\fi}
\def\ddef#1{\expandafter\def\csname bb#1\endcsname{\ensuremath{\mathbb{#1}}}}
\ddefloop ABCDEFGHIJKLMNOPQRSTUVWXYZ\ddefloop

\def\ddef#1{\expandafter\def\csname c#1\endcsname{\ensuremath{\mathcal{#1}}}}
\ddefloop ABCDEFGHIJKLMNOPQRSTUVWXYZ\ddefloop

\def\ddef#1{\expandafter\def\csname v#1\endcsname{\ensuremath{\boldsymbol{#1}}}}
\ddefloop ABCDEFGHIJKLMNOPQRSTUVWXYZabcdefghijklmnopqrstuvwxyz\ddefloop

\def\ddef#1{\expandafter\def\csname v#1\endcsname{\ensuremath{\boldsymbol{\csname #1\endcsname}}}}
\ddefloop {alpha}{beta}{gamma}{delta}{epsilon}{varepsilon}{zeta}{eta}{theta}{vartheta}{iota}{kappa}{lambda}{mu}{nu}{xi}{pi}{varpi}{rho}{varrho}{sigma}{varsigma}{tau}{upsilon}{phi}{varphi}{chi}{psi}{omega}{Gamma}{Delta}{Theta}{Lambda}{Xi}{Pi}{Sigma}{varSigma}{Upsilon}{Phi}{Psi}{Omega}{ell}\ddefloop

\DeclareMathOperator*{\argmin}{arg\,min}

\allowdisplaybreaks


\newcommand{\meansd}[2]{$#1 \pm #2$}


\def\algname{\textsc{FedCollab}}
\newcommand{\add}[1]{#1}
\renewcommand{\paragraph}[1]{\textbf{#1}\ \ }

\icmltitlerunning{Optimizing the Collaboration Structure in Cross-Silo Federated Learning}

\begin{document}

\twocolumn[
\icmltitle{Optimizing the Collaboration Structure in Cross-Silo Federated Learning}



\icmlsetsymbol{equal}{*}

\begin{icmlauthorlist}
\icmlauthor{Wenxuan Bao}{uiuc}
\icmlauthor{Haohan Wang}{uiuc}
\icmlauthor{Jun Wu}{uiuc}
\icmlauthor{Jingrui He}{uiuc}
\end{icmlauthorlist}

\icmlaffiliation{uiuc}{University of Illinois Urbana-Champaign, Champaign, IL, USA}

\icmlcorrespondingauthor{Jingrui He}{jingrui@illinois.edu}

\icmlkeywords{Federated Learning, Personalized Federated Learning, Negative Transfer}

\vskip 0.3in
]



\printAffiliationsAndNotice{}  

\begin{abstract}
    In federated learning (FL), multiple clients collaborate to train machine learning models together while keeping their data decentralized. Through utilizing more training data, FL suffers from the potential negative transfer problem: the global FL model may even perform worse than the models trained with local data only. In this paper, we propose {\algname}, a novel FL framework that alleviates negative transfer by clustering clients into non-overlapping coalitions based on their distribution distances and data quantities. As a result, each client only collaborates with the clients having similar data distributions, and tends to collaborate with more clients when it has less data. We evaluate our framework with a variety of datasets, models, and types of non-IIDness. Our results demonstrate that {\algname} effectively mitigates negative transfer across a wide range of FL algorithms and consistently outperforms other clustered FL algorithms.
\end{abstract}

\section{Introduction}

Federated learning (FL) is a distributed learning system where multiple clients collaborate to train a machine learning model under the orchestration of the central server, while keeping their data decentralized~\cite{fedavg}. We focus on \textit{cross-silo} FL, where clients are organizations with data that differ in their \textit{distributions} and \textit{quantities} \cite{leaf}. For example, the clients can be hospitals with varying patient types and numbers (e.g., children's hospitals, trauma centers). Although cross-silo FL clients can train \textit{local models} with their own data locally (\textit{local training}), they participate in FL for a model trained with more data, which potentially performs better than local models. 

Traditionally, global FL (GFL)~\cite{fedavg,fednova,fedprox} trains a single \textit{global model} for all clients that minimizes a weighted average of local losses. It is a natural solution when clients have independent and identically distributed (IID) data. However, when clients have non-IID data, GFL may suffer from the \textit{negative transfer} problem: the global model performs even worse than the local models \cite{fedfomo}. The negative transfer problem also plagues many personalized FL (PFL) algorithms \cite{per-fedavg,pfedme,ditto}. Although these algorithms allow each client to train a personalized model with parameters different from the global model, the regularization of the global model still prevents personalized models from achieving better performance than local models. 

One way to avoid negative transfer is clustered FL (CFL) \cite{cfl,fesem,kmeans}. CFL groups clients with similar data distribution into coalitions, and trains FL models within each coalition. As a result, each client only collaborates with other clients in the same coalition. By changing the collaboration structure, CFL can alleviate negative transfer with almost no additional computation and communication costs. 

\begin{figure}
\vspace{2mm}
    \begin{center}
        \includegraphics[width=0.9\columnwidth]{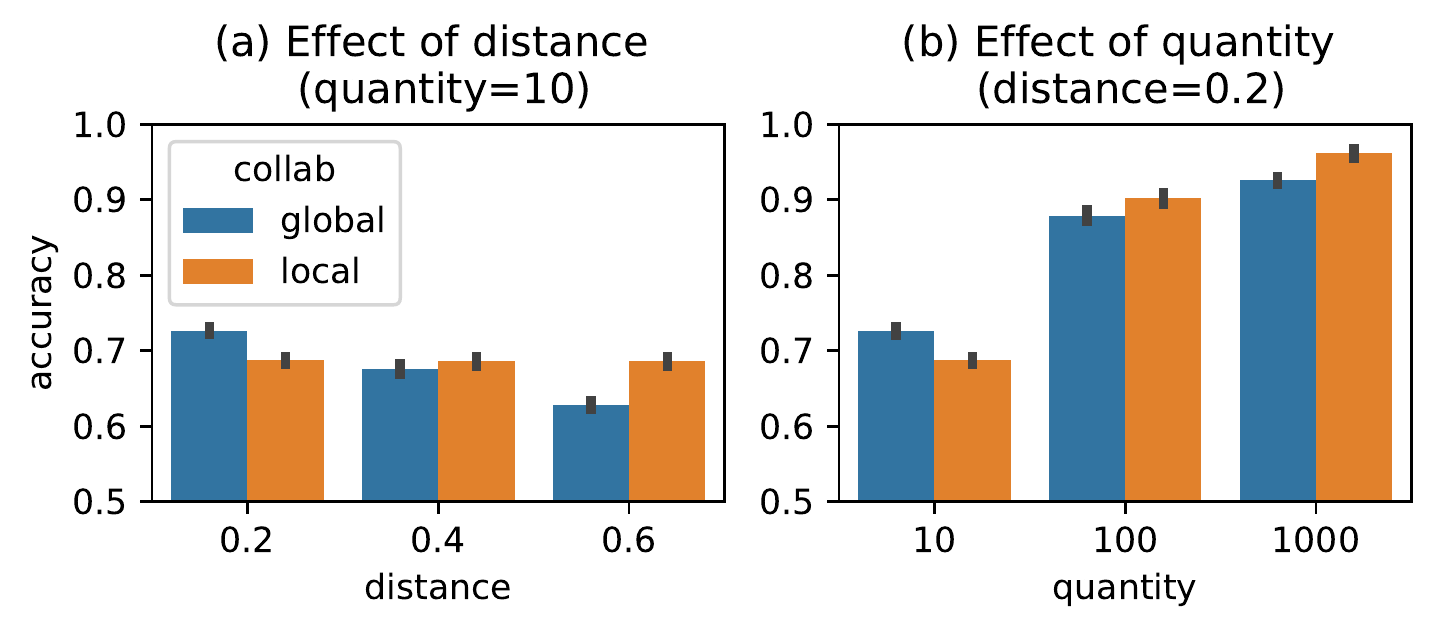}
        \caption{Effects of distribution distance and data quantity. When two clients have large distribution distance or large quantities, the local model performs better than global model. \label{fig:intro}}
    \end{center}
    \vspace{-5mm}
\end{figure}

A natural follow-up question would be what determines the best collaboration structure. We summarize two key factors for characterizing client collaboration: distribution distance and data quantity. We start with a simple 2-client scenario, studying whether the global model or the local model has higher accuracy. As shown in Figure \ref{fig:intro}, when two clients have small distribution distance and small quantity, the global model has higher accuracy, which is a scenario suitable for GFLs. When the distribution distance between two clients increases (Figure \ref{fig:intro}(a)), the local model performs better than the global model, showing that distribution distance influences the optimal collaboration structure. Meanwhile, it is often ignored that the data quantity also influences the optimal collaboration structure: given the same distribution distance, when the data quantity increases (Figure \ref{fig:intro}(b)), the local model also performs better than the global model. In other words, clients with more data are more ``picky'' in the choice of collaborators. 

Previous CFL algorithms \cite{ifca,fesem,cfl} generate clusters mainly based on loss values or parameter/gradient similarities, which have the following drawbacks. First, they ignore the influence of quantities, and group clients together, even when these clients have large quantities and prefer local training. Second, most CFL algorithms rely on indirect information of distribution distance, which does not recover the real distribution distance given the high complexity of neural networks (e.g., nonlinearity and permutation invariance). Finally, most CFL algorithms optimize the model parameters and collaboration structure simultaneously. Thus, they reinforce the current collaboration structure and fall into local optima easily, resulting in sub-optimal model performance. 

In this paper, we propose {\algname} to optimize for a better collaboration structure. First, we derive a theoretical error bound for each client in the FL system. The error bound consists of three terms: an irreducible minimal error term related to the model and data noise, a generalization error term depending on data quantities, and a dataset shift term depending on pairwise distribution distance between clients. By minimizing the error bounds, {\algname} solves for the optimal collaboration structure with awareness of both quantities and distribution distances. Second, to better estimate pairwise distribution distances without violating the privacy constraint of FL, we use a light-weight client discriminator between each pair of clients to predict which client the labeled data comes from, and train the discriminator within the FL framework. Third, we design an efficient optimization method to minimize the error bound. It requires no model training and solves the collaboration in seconds. Finally, we run FL algorithms within each coalition we identify. Since the model training and collaboration structure optimization are disentangled, {\algname} can be seamlessly integrated with any GFL or PFL algorithms. 

\paragraph{Contributions}
We summarize our contributions below.
\begin{itemize}[itemsep=0pt,topsep=0pt]
    \item We derive error bounds for FL clients and summarize two key factors that affect the model performance for each client: data quantity and distribution distance. (Section \ref{sec:analysis})
    \item We propose {\algname} to solve for the best collaboration structure, including a distribution difference estimator and an efficient optimizer. (Section \ref{sec:method})
    \item We empirically test our algorithm with a wide range of datasets, models, and types of non-IIDness. {\algname} enhances a variety of FL algorithms by providing better collaboration structures, and outperforms existing CFL algorithms in accuracy. (Section \ref{sec:experiments}) 
\end{itemize}

\section{Related Works}\label{sec:related_works}

\paragraph{Global Federated Learning}
Global federated learning (GFL) aims to train a single global model for private clients, by assuming that all the clients follow the same data distribution. Typically, FedAvg \cite{fedavg} is proposed to minimize a weighted average of local client objectives (e.g., empirical risks). More recently, many efforts \cite{fedprox,scaffold} have been made to speed up the convergence of FL on top of FedAvg. Another related line of works \cite{afl,qffl} is performance fairness aware federated learning, which encourages a uniform distribution of accuracy among clients. 
However, it is revealed \cite{fedfomo} that under severe data heterogeneity among clients, these GFL algorithms suffer from \textit{negative transfer} with undesirable performance on local clients.

\paragraph{Personalized Federated Learning}
In recent years, personalized federated learning has been proposed to deal with statistical data heterogeneity among clients. We roughly group them into two categories: coarse-grained and fine-grained. For coarse-grained PFL~\cite{per-fedavg,pfedme,ditto}, each client can further optimize a global model (trained with the union of local datasets) with its own data. This kind of PFL algorithm cannot choose which clients to collaborate with, and suffer from negative transfer when the client's own data distribution is distinct from the population. For fine-grained PFL~\cite{mocha}, clients can directly collaborate with some of the other clients. However, most of the fine-grained PFL algorithms significantly change the communication protocol of FL or introduce additional communication and computation costs \cite{mocha,fedfomo}. 

\paragraph{Clustered Federated Learning}
Similar to our algorithm, clustered federated learning partitions clients into clusters. For example, IFCA \cite{ifca} initializes multiple models and lets each client choose one based on the training loss; FeSEM \cite{fesem} lets each client choose a cluster with similar weights; and CFL \cite{cfl} iteratively bipartition the clients based on their cosine similarity of gradients. However, all these methods only consider distribution distances and ignore the importance of data quantities, which also play a key role in collaboration performance. To the best of our knowledge, \cite{modelshare} is the only work that considers the quantity in the optimization of the collaboration structure. However, it is limited to linear models with analytical solutions, and only considers a simplified non-IID setting.

\section{Analysis of Client Error Bound} \label{sec:analysis}

In this section, we derive a theoretical error bound to understand how data quantity and distribution distance affect the model performance for each client. 

\subsection{Setup} \label{subsec:analysis_setup}

We consider a FL system with $N$ clients connected to a central server. Each client $i \in \{1, \cdots, N\}$ has a dataset $\hat\cD_i = \{(\vx_k^{(i)}, \vy_k^{(i)})\}_{k=1}^{m_i}$ with $m_i$ samples drawn from its underlying true data distribution $\cD_i$, where $\vx_k^{(i)} \in \cX$ is the feature and $\vy_k^{(i)} \in \cY$ is the label. We denote $m = \sum_{i =1}^N m_i$ as the \textit{total quantity} of samples and $\vbeta = [\beta_1, \cdots, \beta_N] = [\frac{m_1}{m}, \cdots, \frac{m_N}{m}]$ as the \textit{client quantity distribution}. Given a machine learning model (hypothesis) $h$ and risk function $\ell$, client $i$'s \textit{local expected risk} is given by $\epsilon_i(h) = \bbE_{(\vx, \vy) \in \cD_i} \ell(h(\vx), \vy)$, and its \textit{local empirical risk} is given by $\hat{\epsilon}_i(h) = \frac{1}{m_i}\sum_{k=1}^{m_i} \ell(h(\vx_k^{(i)}), \vy_k^{(i)})$. The goal of each client $i \in \{1, \cdots, N\}$ is to find a model $h$ within the hypothesis space $\cH$ that minimizes its local expected risk, which we denote as $h_i^* = \argmin_{h \in \cH} \epsilon_i(h)$. However, clients can only optimize their models with their finite samples $\hat\cD_1, \cdots, \hat\cD_N$. There are several representative options: local training, global FL (GFL), and clustered FL (CFL). 

\paragraph{Local Training}
In local training, each client trains its own model individually without sharing information with other clients. Each local model minimizes the local empirical risk $\hat{h}_i = \argmin_{h \in \cH} \hat{\epsilon}_i(h)$. Despite its simplicity, local training can only utilize each client's local data, which impedes the generalization performance of local models. 

\paragraph{GFL}
FL provides a way for each client to utilize other clients' data to enhance the model, without directly exchanging raw data. In typical global FL algorithms \cite{fedavg,fedprox,fednova}, clients globally train a model to minimize an average of local empirical risks weighted by each client's data quantity, i.e., $\hat{h}_{\vbeta} = \argmin_{h \in \cH} \sum_{i=1}^N \beta_i \hat{\epsilon}_i(h)$. When all clients have the same underlying distribution $\cD_1 = \cdots = \cD_N$, GFL enlarges the ``training set'' with IID data, which improves the model generalization performance from a theoretical perspective. However, when clients have different distributions, the global model significantly degrades, and even performs worse than local training \cite{fedfomo}, which we refer to as the negative transfer problem. 

\paragraph{CFL}
More generally, the CFL framework partitions clients into non-overlapping coalitions and allows each client to train models only with clients in the same coalition. Clients in the same coalition share the same model, while clients in different coalitions have different model parameters. For a client $i$ in coalition $\cC$, it trains a model with all other clients in $\cC$ to minimize a weighted average of local empirical risks weighted by $\valpha_i = [\alpha_{i1}, \cdots \alpha_{iN}]$: 
\begin{align}
    \hat{h}_{\valpha_i} = \argmin_{h \in \cH} \sum_{j=1}^N \alpha_{ij} \hat{\epsilon}_j(h) \label{eq:cfl_loss}
\end{align}
where $\alpha_{ij} = \frac{\beta_j \cdot \bbI\{j \in \cC \}}{\sum_{k \in \cC} \beta_k} $ ($\bbI$ is the indicator function). By finding a good collaboration structure, CFL groups clients with similar distributions into the same coalition, so they can enjoy better generalization without suffering a lot of negative transfer. Notice that the objective (\ref{eq:cfl_loss}) subsumes both local training and GFL, by setting $\valpha_i$ as a one-hot vector (i.e., $\alpha_{ii} = 1$ and $\alpha_{ij} = 0$ for $j \neq i$) and $\valpha_i = \vbeta$. 

Given various collaboration options above, a natural question rises: \textit{which collaboration structure is optimal for client $i$}, i.e., having the lowest local expected risk $\epsilon_i(h)$? Since there are at least $2^{N-1}$ different coalitions for client $i$, it is prohibitively expensive to enumerate every option and pick the best model. Instead, in the next part, we derive a theoretical error bound for each client to estimate the error without training machine learning models practically. 

\subsection{Theoretical Error Bound}
Before deriving the generalization error bound for FL, we first introduce two concepts: quantity-aware function and distribution difference. 

\begin{definition}[Quantity-aware function] \label{def:gen}
    For a given hypothesis space $\cH$, combination weights $\valpha_i$, quantity distribution $\vbeta$, total quantity $m$, for any $\delta \in (0, 1)$, with probability at least $1 - \delta$ (over the choice of the samples), a quantity-aware function $\phi_{|\cH|}(\valpha_i, \vbeta, m, \delta)$ satisfies that for all $h \in \cH$, 
    \begin{align}
        | \hat\epsilon_{\valpha_i}(h) - \epsilon_{\valpha_i}(h) | \leq \phi_{|\cH|}(\valpha_i, \vbeta, m, \delta)
    \end{align}
\end{definition}

The quantity-aware function can be quantified with traditional generalization error bounds, including VC dimension \cite{ben2010theory} and weighted Rademacher complexity \cite{weighted_rad} (see Appendix \ref{sec:proof_of_theorem_3_1}). For example, when using VC dimension $d$ to quantify the complexity of hypothesis space $\cH$, we have
\begin{align}
\begin{split}
    & \phi_{|\cH|}(\valpha_i, \vbeta, m, \delta) \\
    & = \sqrt{\left( \sum_{j=1}^N \frac{\alpha_{ij}^2}{\beta_j} \right) \left( \frac{2d \log (2m + 2) + \log(4 / \delta)}{m} \right)}
\end{split}
\label{def:vc}
\end{align}

\begin{definition}[Distribution difference] \label{def:dif}
    For a given hypothesis space $\cH$, the distribution difference satisfies that for any two distributions $\cD_i, \cD_j$, the following holds for all $h \in \cH$, 
    \begin{align}
        | \epsilon_i(h) - \epsilon_j(h) | \leq D(\cD_i, \cD_j)
    \end{align}
\end{definition}

Distribution difference can also be quantified with a variety of distribution distances, including $\cH \Delta \cH$-distance \cite{ben2010theory} and $\cC$-divergence \cite{label_disc,c_divergence}. When using $\cC$-divergence, we have
\begin{align}
    D(\cD_i, \cD_j) = \max_{h \in \cH} \left| \epsilon_i(h) - \epsilon_j(h) \right|
\end{align}

\begin{theorem}\label{thm:error}
Let $\hat{h}_{\valpha_i}$ be the empirical risk minimizer defined in Eq. (\ref{eq:cfl_loss}) and $h_i^*$ be client $i$'s expected risk minimizer. For any $\delta \in (0, \frac{1}{2})$, with probability at least $1 - 2 \delta$, the following holds
\begin{align}
\begin{split}
    \epsilon_i(\hat{h}_{\valpha_i}) \leq \epsilon_i( h_i^* ) & + 2 \phi_{|\cH|}(\valpha_i, \vbeta, m, \delta) \\ 
    & + 2 \sum_{j \neq i} \alpha_{ij} D(\cD_i, \cD_j)
\end{split}
\end{align}
where $\epsilon_i( h_i^* ) = \min_{h \in \cH}\epsilon_i( h )$ is the minimal local expected risk that cannot be optimized given the distribution $\cD_i$ and the hypothesis space $\cH$.
\end{theorem}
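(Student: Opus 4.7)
The plan is to chain elementary inequalities that connect the target expected risk $\epsilon_i(\hat h_{\valpha_i})$ to $\epsilon_i(h_i^*)$ through the intermediate weighted mixture risks
\begin{align*}
\epsilon_{\valpha_i}(h) = \sum_{j=1}^N \alpha_{ij}\epsilon_j(h), \qquad \hat\epsilon_{\valpha_i}(h) = \sum_{j=1}^N \alpha_{ij}\hat\epsilon_j(h).
\end{align*}
This is the standard Ben-David--style domain-adaptation decomposition, adapted to the multi-source weighted setting: first transport from the target $\cD_i$ to the mixture using Definition 3.3, then from expected risk to empirical risk using Definition 3.1, then exploit the ERM optimality of $\hat h_{\valpha_i}$, and finally reverse the two bounds applied at $h_i^*$.

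Concretely, since $\sum_{j}\alpha_{ij}=1$ and the $j=i$ term vanishes in the difference, Definition 3.3 gives for every $h\in\cH$
\begin{align*}
|\epsilon_i(h)-\epsilon_{\valpha_i}(h)| \leq \sum_{j\neq i}\alpha_{ij}|\epsilon_i(h)-\epsilon_j(h)| \leq \sum_{j\neq i}\alpha_{ij}D(\cD_i,\cD_j).
\end{align*}
Applied at $h=\hat h_{\valpha_i}$, this bounds $\epsilon_i(\hat h_{\valpha_i})$ by $\epsilon_{\valpha_i}(\hat h_{\valpha_i})$ plus the shift term. Definition 3.1 then replaces $\epsilon_{\valpha_i}(\hat h_{\valpha_i})$ by $\hat\epsilon_{\valpha_i}(\hat h_{\valpha_i})+\phi_{|\cH|}(\valpha_i,\vbeta,m,\delta)$; the ERM step $\hat\epsilon_{\valpha_i}(\hat h_{\valpha_i})\leq \hat\epsilon_{\valpha_i}(h_i^*)$ is immediate from the definition of $\hat h_{\valpha_i}$ in Eq.~(\ref{eq:cfl_loss}); a second pass of Definition 3.1 at $h_i^*$ replaces $\hat\epsilon_{\valpha_i}(h_i^*)$ by $\epsilon_{\valpha_i}(h_i^*)+\phi_{|\cH|}$; and a second pass of Definition 3.3 at $h_i^*$ replaces $\epsilon_{\valpha_i}(h_i^*)$ by $\epsilon_i(h_i^*)+\sum_{j\neq i}\alpha_{ij}D(\cD_i,\cD_j)$. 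Summing the four additive contributions yields exactly the claimed bound, with two copies of $\phi_{|\cH|}$ and two copies of the shift sum.

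The only subtle point is tracking the probability. The uniform concentration statement in Definition 3.1 is two-sided and holds with probability at least $1-\delta$, so in principle one invocation could cover both $\hat h_{\valpha_i}$ and $h_i^*$; the stated $1-2\delta$ reflects invoking a one-sided tail bound separately for the two directions that are actually used (empirical $\leq$ expected at $\hat h_{\valpha_i}$, and expected $\leq$ empirical at $h_i^*$), which is the convenient form for the standard VC and weighted Rademacher instantiations such as Eq.~(\ref{def:vc}). Definition 3.3 is deterministic and contributes no randomness. The genuinely non-routine work hidden behind the chain is not this theorem but the verification that a concrete $\phi_{|\cH|}$ exists with the required $\sqrt{\sum_j \alpha_{ij}^2/\beta_j}$ dependence on the weights; that derivation (essentially a weighted Rademacher argument in the spirit of Mansour--Mohri--Rostamizadeh) is orthogonal to the chain above and belongs in the appendix on $\phi_{|\cH|}$.
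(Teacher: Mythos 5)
Your proposal is correct and matches the paper's own argument essentially step for step: the paper proves the same chain (target risk to mixture risk via the weighted triangle-inequality decomposition of Definition 3.3, mixture expected to empirical via Definition 3.1, the ERM comparison at $\hat h_{\valpha_i}$ versus $h_i^*$, then the two bounds reversed at $h_i^*$), and likewise charges the two uses of the uniform convergence bound with a union bound to reach probability $1-2\delta$. Your side remark that a single two-sided uniform invocation would already cover both hypotheses (giving $1-\delta$) is a valid observation, but it only makes the stated bound conservative, not incorrect.
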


Theorem \ref{thm:error} reveals that when we form a coalition for client $i$ to minimize its local expected risk $\epsilon_i(\hat{h}_{\valpha_i})$, both quantity information $(\vbeta, m)$ and distribution difference $\{D(\cD_i, \cD_j)\}_{i, j}$ should be considered. To better understand how Theorem \ref{thm:error} can guide the clustering of clients, we consider two special cases in Corollary \ref{crl:special} below. 

\begin{corollary} \label{crl:special}
When using VC-dimension bound (\ref{def:vc}) as the quantity aware function, the following results hold. 
\begin{itemize}[itemsep=0pt,topsep=0pt]
    \item If $D(\cD_i, \cD_j) = 0, \forall i, j$, GFL minimizes the error bound of Theorem \ref{thm:error} with $\alpha_{ij} = \beta_j, \forall {j}$. 
    \item If $\min_{j \neq i} D(\cD_i, \cD_j) > \frac{\sqrt{2d \log (2m + 2) + \log (4  / \delta)}\sqrt{m}}{2 m_i}$, local training minimizes the error bound of Theorem \ref{thm:error} with $\alpha_{ii} = 1$ and $\alpha_{ij} = 0, \forall j \neq i$. 
\end{itemize}
\end{corollary}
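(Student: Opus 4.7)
The plan is to substitute the VC-dimension form of the quantity-aware function into Theorem~\ref{thm:error}. Writing $K^2=2d\log(2m+2)+\log(4/\delta)$ and using $\beta_j=m_j/m$, the bound for client $i$ becomes
\begin{align*}
\epsilon_i(\hat{h}_{\valpha_i}) \le \epsilon_i(h_i^*)+\frac{2K}{\sqrt{m}}\sqrt{\sum_{j=1}^N \frac{\alpha_{ij}^2}{\beta_j}}+2\sum_{j\ne i}\alpha_{ij}D(\cD_i,\cD_j).
\end{align*}
In both cases the task reduces to minimizing this right-hand side over the simplex $\{\valpha_i:\alpha_{ij}\ge 0,\ \sum_j\alpha_{ij}=1\}$.

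For the first claim, all pairwise differences vanish, the cross term disappears, and it remains to minimize $f(\valpha_i):=\sum_j\alpha_{ij}^2/\beta_j$. Applying Cauchy--Schwarz to the sequences $(\alpha_{ij}/\sqrt{\beta_j})_j$ and $(\sqrt{\beta_j})_j$ gives $1=(\sum_j\alpha_{ij})^2\le f(\valpha_i)\sum_j\beta_j=f(\valpha_i)$, with equality iff $\alpha_{ij}\propto \beta_j$. The simplex constraint then pins down $\alpha_{ij}=\beta_j$, which is precisely the GFL weighting.

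For the second claim, local training $\valpha_i=\ve_i$ evaluates the bound to $\epsilon_i(h_i^*)+2K/\sqrt{m_i}$, and I need to show that no competing $\valpha_i$ does better. I would combine two elementary lower bounds: dropping the off-diagonal terms of $f$ yields $\sqrt{f(\valpha_i)}\ge \alpha_{ii}/\sqrt{\beta_i}$, so that $\frac{2K}{\sqrt{m}}\sqrt{f(\valpha_i)}\ge 2\alpha_{ii}K/\sqrt{m_i}$; and the non-negativity of $\alpha_{ij}$ gives $\sum_{j\ne i}\alpha_{ij}D(\cD_i,\cD_j)\ge(1-\alpha_{ii})\min_{j\ne i}D(\cD_i,\cD_j)$. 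Substituting these in and demanding that the result dominate $2K/\sqrt{m_i}$ for every $\alpha_{ii}\in[0,1)$ reduces to a linear inequality in $\min_{j\ne i}D(\cD_i,\cD_j)$ that rearranges to the stated threshold.

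The main obstacle is calibrating the inequalities in the second part to recover exactly the constant $K\sqrt{m}/(2m_i)$. Different Cauchy--Schwarz estimates of $f$---the crude $f\ge 1$, the diagonal bound $f\ge \alpha_{ii}^2/\beta_i$, or the refined $\sum_{j\ne i}\alpha_{ij}^2/\beta_j\ge(1-\alpha_{ii})^2/(1-\beta_i)$---yield valid sufficient conditions but with different constants in front of $K/\sqrt{m_i}$; the sharp threshold will likely come from a KKT analysis of the (convex) program, writing out the stationarity condition at $\valpha_i=\ve_i$ and reading the threshold off the dual-feasibility condition $\lambda<2D(\cD_i,\cD_j)$ for every $j\ne i$.
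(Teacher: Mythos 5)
Your first bullet is fine: the Cauchy--Schwarz argument on $(\alpha_{ij}/\sqrt{\beta_j})_j$ and $(\sqrt{\beta_j})_j$ is equivalent to the paper's observation that $\sum_j \alpha_{ij}^2/\beta_j = \chi^2(\valpha_i\|\vbeta)+1$, minimized exactly at $\valpha_i=\vbeta$, which is attainable by the grand coalition.

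The second bullet has a genuine gap, and it is not just a matter of ``calibrating constants.'' You relax the problem to the full simplex $\{\valpha_i:\alpha_{ij}\ge 0,\sum_j\alpha_{ij}=1\}$, but over that set the stated threshold is simply not the right one: writing $K=\sqrt{2d\log(2m+2)+\log(4/\delta)}$, the KKT/first-order condition at $\valpha_i=\ve_i$ (the objective is convex, so it is also sufficient) reads $2D(\cD_i,\cD_j)\ge \partial_{\alpha_{ii}}(\text{bound})=2K/\sqrt{m_i}$ for all $j\ne i$, i.e.\ the simplex threshold is $K/\sqrt{m_i}$, exactly what your crude two-inequality bound already gives. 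Since $K\sqrt{m}/(2m_i)<K/\sqrt{m_i}$ whenever $m<4m_i$ (e.g.\ two clients with equal data), under the corollary's hypothesis local training need not even be a stationary point of the relaxed problem: putting a small weight $t$ on a client $j$ with $K\sqrt{m}/(2m_i)<D(\cD_i,\cD_j)<K/\sqrt{m_i}$ decreases the bound to first order in $t$. So the KKT analysis you defer to would return $K/\sqrt{m_i}$, not the stated constant, and no refinement of Cauchy--Schwarz over the simplex can close this.

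The statement must instead be read (as the paper proves it) over coalition-structured weights, Eq.~(\ref{eq:collab_mat}): if client $j$ is included in coalition $\cC$, its weight is forced to be $m_j/m_\cC$ with $m_\cC=\sum_{k\in\cC}m_k$, and cannot shrink to zero. The paper then compares a coalition $\cC$ containing some client beyond the threshold with the sub-coalition $\cC^-$ obtained by deleting all such clients, using the closed form $\text{bound}(\cC)=\epsilon_i(h_i^*)+2K/\sqrt{m_\cC}+2\sum_{j\in\cC\setminus\{i\}}\tfrac{m_j}{m_\cC}D(\cD_i,\cD_j)$ and the elementary estimate $1/\sqrt{m_{\cC^-}}-1/\sqrt{m_\cC}<\tfrac{\sqrt{m}}{2}\bigl(1/m_{\cC^-}-1/m_\cC\bigr)$; the factor $\sqrt{m}/(2m_i)$ in the threshold comes from this step together with the telescoping of the forced weights, after which local training follows as the special case where every other client exceeds the threshold. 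To repair your proof, replace the simplex relaxation by this coalition-versus-sub-coalition comparison.
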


Corollary \ref{crl:special} matches with our observation in FL. GFL is most powerful when clients have the same data distribution. However, with large distribution distance and data quantity, local training becomes a better option. More generally, Corollary \ref{crl:general} shows that \textit{clients with more data are more ``picky'' in the choice of collaborators}. When a client $i$ has $m_i$ samples, it will only choose collaborators from clients with distribution difference smaller than or equal to $D_\text{thr}$, which decreases with the increase of $m_i$. 

\begin{corollary} \label{crl:general}
When using VC-dimension bound (\ref{def:vc}) as the quantity aware function, for a client $i$ with $m_i$ samples, if its coalition $\cC$ minimizes the error bound of Theorem \ref{thm:error}, then $\cC$ does not include any clients with distribution distance $D(\cD_i, \cD_j) > D_\text{thr}$, where $D_\text{thr} = \frac{\sqrt{2d \log (2m + 2) + \log (4  / \delta)}\sqrt{m}}{2 m_i}$.
\end{corollary}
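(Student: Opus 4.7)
The plan is to argue by contradiction. Suppose $\cC^*$ is an optimal coalition for client $i$ that nonetheless contains some $j_0 \neq i$ with $D(\cD_i,\cD_{j_0}) > D_\text{thr}$; I will produce a strictly better coalition by stripping \emph{all} such ``bad'' clients at once.

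First I would specialize the bound of \cref{thm:error}. For a coalition $\cC \ni i$ with $\alpha_{ij}$ as in (\ref{eq:cfl_loss}), the VC expression (\ref{def:vc}) collapses through the identity $\sum_j \alpha_{ij}^2/\beta_j = m/M_\cC$, where $M_\cC := \sum_{k \in \cC} m_k$. Setting $C := 2d\log(2m+2) + \log(4/\delta)$, the upper bound (minus the irreducible $\epsilon_i(h_i^*)$) reduces to
\[
    f(\cC) \;=\; 2\sqrt{C/M_\cC} \;+\; \frac{2}{M_\cC}\sum_{j \in \cC\setminus\{i\}} m_j\, D(\cD_i,\cD_j).
\]

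Next, partition $\cC^* \setminus \{i\}$ into bad clients $\cC_B = \{j : D(\cD_i,\cD_j) > D_\text{thr}\}$ (nonempty since $j_0 \in \cC_B$) and good clients $\cC_G$, and write $M = M_{\cC^*}$, $M_B = \sum_{j \in \cC_B} m_j$, $M_G = \sum_{j \in \cC_G} m_j$, $M'' = M - M_B = m_i + M_G$, with $A_B, A_G$ the corresponding $m_j$-weighted sums of $D(\cD_i,\cD_j)$ over $\cC_B, \cC_G$. Optimality of $\cC^*$ gives $f(\cC^*) \leq f(\cC^* \setminus \cC_B)$; clearing denominators and using $\sqrt{M}-\sqrt{M''} = M_B/(\sqrt{M}+\sqrt{M''})$, this rearranges into $A_B M'' - A_G M_B \leq \sqrt{C}\, M_B \sqrt{MM''}/(\sqrt{M}+\sqrt{M''})$. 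Plugging in the two definitional estimates $A_B > M_B D_\text{thr}$ and $A_G \leq M_G D_\text{thr}$, dividing by $M_B$, and using $M'' - M_G = m_i$ reduces the chain to $D_\text{thr}\, m_i < \sqrt{C}\sqrt{MM''}/(\sqrt{M}+\sqrt{M''})$; substituting $D_\text{thr}\, m_i = \sqrt{Cm}/2$ collapses this to
\[
    \sqrt{m}\bigl(\sqrt{M}+\sqrt{M''}\bigr) \;<\; 2\sqrt{MM''},
\]
which contradicts the trivial observation that $m \geq M$ and $m \geq M''$ imply $\sqrt{m}\sqrt{M} \geq \sqrt{MM''}$ and $\sqrt{m}\sqrt{M''} \geq \sqrt{MM''}$.

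The step I expect to be delicate is the choice of comparator. Comparing $\cC^*$ only with $\cC^*\setminus\{j_0\}$ (removing the single witnessing bad client) does \emph{not} close cleanly, because the remaining bad clients still inflate a cross-term whose sign is hard to pin down. Removing all of $\cC_B$ at once is what makes the algebra work: it permits the two-sided threshold estimates on $A_B$ and $A_G$ separately, and the cancellation $M''-M_G = m_i$ combined with the scaling $D_\text{thr} = \sqrt{Cm}/(2m_i)$ is what turns the general-coalition optimality inequality into the same ``$\sqrt{m}(\sqrt{M}+\sqrt{M''}) \geq 2\sqrt{MM''}$'' impossibility that already powers \cref{crl:special}.
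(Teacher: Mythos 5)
Your proposal is correct and follows essentially the same route as the paper's own proof: the paper likewise collapses the VC term to $2\sqrt{C/m_{\cC}}$ via $\sum_j \alpha_{ij}^2/\beta_j = m/m_{\cC}$, compares the coalition against the one obtained by removing \emph{all} clients with $D(\cD_i,\cD_j) > D_\text{thr}$ at once, applies the same two-sided threshold estimates on the good/bad distance sums, and closes with the same inequality $\tfrac{1}{\sqrt{m_{\cC^-}}}+\tfrac{1}{\sqrt{m_{\cC}}} \ge \tfrac{2}{\sqrt{m}}$. The only difference is presentational: the paper shows the bound strictly decreases after the removal, while you phrase the identical algebra as a contradiction with optimality.
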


In the next section, we design a framework using the error bound to guide the clustering of FL clients. 

\section{Proposed Methods}\label{sec:method}

In this section, we present our method {\algname} to optimize the collaboration structure under the guidance of Theorem \ref{thm:error}. We transform the error bound in Theorem \ref{thm:error} to an optimization objective, estimate client distribution differences without violating the privacy constraints, and design an efficient algorithm to optimize the collaboration structure with the awareness of both data quantity and distribution difference. Figure \ref{fig:overview} provides an overview of our proposed method. 

\subsection{{\algname} Objective}

We first transform the error bound to a practical optimization objective. We remove the non-optimizable $\epsilon_{i}(h_i^*)$, and replace the quantity-aware term $\phi_{|\cH|}(\valpha_i, \vbeta, m, \delta)$ and the pair-wise distribution difference $D(\cD_i, \cD_j)$ with empirical estimations. Finally, we combine error bounds for each client together to form a global objective for clustering. 

\begin{figure}
\vspace{2mm}
\begin{center}
\centerline{\includegraphics[width=1.0\columnwidth]{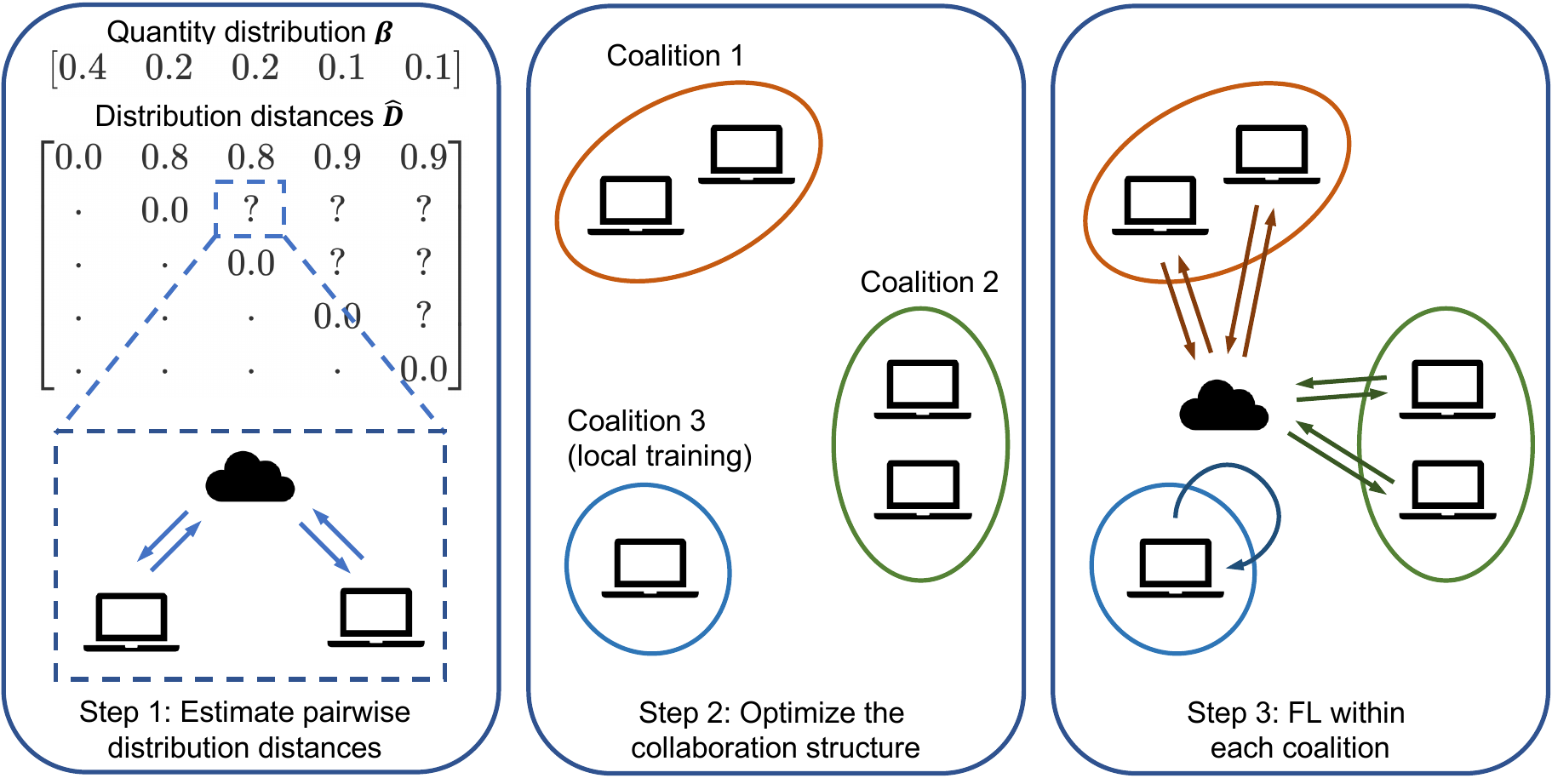}}
\caption{An overview of {\algname}}
\label{fig:overview}
\end{center}
\vspace{-5mm}
\end{figure}

\paragraph{Quantifying the Quantity-Aware Function}
The quantity-aware function $\phi_{|\cH|}(\valpha_i, \vbeta, m, \delta)$ indicates the influence of data quantity. However, it is related to the complexity of hypothesis space $\cH$, which can be hard to estimate accurately for neural networks. Inspired by earlier works on the model-complexity-based generalization bounds \cite{generalization1,generalization2,label_disc}, we treat the model capacity constant $C = \sqrt{2d \log (2m + 2) + \log (4 /\delta )}$ as a hyperparameter to tune. This gives the empirical quantity-aware function: 
\begin{align}
    \hat \phi(\valpha_i, \vbeta, m, C) = \frac{C}{\sqrt{m}} \cdot \sqrt{\sum_{j=1}^N \frac{\alpha_{ij}^2}{\beta_j}} 
\end{align}
Notice that $\vbeta$ and $m$ can be directly calculated with the quantities reported by each client directly. 

\paragraph{Quantifying the Distribution Differences}
In Theorem \ref{thm:error}, the distribution difference $D(\cD_i, \cD_j)$ is defined on two clients' underlying distributions $\cD_i, \cD_j$, which is typically not available in practice. Previous CFL algorithms \cite{cfl,fesem} usually rely on the similarity of parameters/gradients, which indirectly reflect the distribution difference of clients. These methods are less accurate in the estimation of distribution distance, due to the non-convexity and permutation invariance of neural networks~\cite{Wang2020Federated}. For example, even when we train two neural networks on two identical datasets, the parameters of two networks can vary significantly due to the differences in parameter initialization, the randomness of data loading, etc. 

In domain adaptation, when estimating the distribution distance between two domains (distributions), a common practice is to train a domain discriminator \cite{ben2010theory} to predict which domain a randomly drawn sample is from. However, traditional domain adaptation requires putting data from two domains together, which violates the privacy constraints of FL. Therefore, we design an algorithm to estimate pairwise distribution difference between two clients without sharing their data. Notice that different from domain adaptation, our goal is to estimate the distribution difference, rather than aligning two distributions. 

In particular, we use the $\cC$-divergence to quantify the distribution differences, i.e., 
\begin{align}
\begin{split}
    & D(\cD_i, \cD_j) = \max_{h \in \cH} \left| \epsilon_i(h) - \epsilon_j(h) \right| = \\
    & \max_{h \in \cH} \left| \bbE_{(\vx, \vy) \in \cD_i} \ell(h(\vx), \vy) - \bbE_{(\vx, \vy) \in \cD_j} \ell(h(\vx), \vy) \right| 
\end{split}
\end{align}
where $\ell$ is the 0-1 loss. We can further rewrite $f(\vx, \vy) = \ell(h(\vx), \vy)$ as a mapping $\cX \times \cY \to \{0, 1\}$. \add{With detailed derivation provided in Appendix \ref{eq:derive}}, the equation above can be transformed as
\begin{align*}
\begin{split}
    & \max_{f \in \cF} \left| \Pr_{(\vx, \vy) \in \cD_i} [f(\vx, \vy) = 1] +  \Pr_{(\vx, \vy) \in \cD_j} [f(\vx, \vy) = 0] - 1 \right| \\
    & = \max_{f \in \cF} \left| 2 \cdot \text{BalAcc} (f, \{\cD_i, 1\} \cup \{\cD_j, 0\}) - 1\right| 
\end{split}
\end{align*}
The equation above shows that we can train a \textit{client discriminator} $f \in \cF$ to predict 1, 0 on client $i, j$, respectively. The estimated distance is a simple function of the balanced accuracy (BalAcc) of the discriminator. Intuitively, when two distributions are distinctly different, a classifier will discriminate two distributions with BalAcc $\approx 100\%$, thus the distance $\approx 1$. 
Meanwhile, when two distributions are similar, the classifier cannot outperform random guessing, which results in BalAcc $\approx 50\%$ and thus the distance $\approx 0$. 

Notice that while our FL model takes features $\vx$ as input and predicts label, the client discriminator takes feature-label pairs $(\vx, \vy)$ as input and predicts sample origin. \add{By taking feature-label pairs as input, the estimated $\cC$-divergence can capture a wide range of distribution shifts, including feature shift (different $P(\vx)$), label shift (different $P(\vy)$), and concept shift (different $P(\vy | \vx)$).} In practice, we instantiate the client discriminator $f$ with a 2-layer neural network $f_{\vw}$ with parameters $\vw$, and train the client discriminator within FL framework, with pseudo-code in Algorithm \ref{alg:client_disc}. \add{By using light-weight client discriminator, estimating pairwise distribution differences is much more efficient than training FL models. We quantify and compare their computation and communication complexities in Appendix \ref{subsec:complexity}.} 
\setlength{\textfloatsep}{5mm}
\begin{algorithm}[tb]
    \caption{Training client discriminator \label{alg:client_disc}} 
    \begin{algorithmic}[1]
        \INPUT Clients $i, j$ with local datasets $\hat\cD_i, \hat\cD_j$, $m_{\text{train}}, \vw_S^0, T$
        \OUTPUT Distribution distance estimation $\hat{D}_{ij}$
        \STATE Train-valid split: $\hat{\cD}_i = \hat\cD_i^\text{train} \cup \hat\cD_i^{\text{valid}}, \hat{\cD}_j = \hat\cD_j^\text{train} \cup \hat\cD_j^{\text{valid}}$ with $|\hat\cD_i^\text{train}| = |\hat\cD_j^\text{train}| = m_{\text{train}}$
        \FOR{communication round $t = 1, \cdots, T$}
            \STATE Server sends $\vw_S^{t-1}$ to two clients
            \FOR{client $k \in \{i, j\}$ \textbf{in parallel}}
                \STATE Let client index $c = 1, 0$ for client $i, j$, respectively
                \STATE $\vw_k^t \leftarrow \text{LocalUpdate}(\vw_S^{t-1}, \{\hat\cD_i^\text{train}, c\})$
                \STATE Client sends $\vw_k^t$ to server
            \ENDFOR
            \STATE $\vw_S^t \leftarrow \frac{1}{2} (\vw_i^t + \vw_j^t)$
        \ENDFOR
        \STATE $\hat{D}_{ij} \leftarrow 2 \cdot \text{BalAcc}(f_{\vw_S^T}, \{\cD_i^{\text{valid}}, 1\} \cup \{\cD_j^{\text{valid}}, 0\}) - 1 $
    \end{algorithmic}
\end{algorithm}

\paragraph{Combining Error Bounds from All Clients}
Finally, we combine the error bounds of all clients to form the following objective function. Given a collaboration structure $\{\cC_1, \cdots, \cC_K\}$ with $K$ non-overlapping coalitions, where $K$ is an indeterminate number of coalitions, clients from the same coalition have the same collaboration vector $\valpha_i$ as defined in Eq. (\ref{eq:cfl_loss}) since they share the same global model. Here we define the \textit{collaboration matrix} $\vA = [\valpha_1^\top, \cdots, \valpha_N^\top]^\top$ as follows. 
\vspace{-1mm}
\begin{align}
    \vA_{ij} = \alpha_{ij} = \begin{cases}
        \frac{\beta_j}{\sum_{l \in \cC_k} \beta_l}, & \text{if } i \in \cC_k, j \in \cC_k, \exists k \\
        0, & \text{otherwise}
    \end{cases}
    \label{eq:collab_mat}
\end{align}
\vspace{-1mm}
Then, the {\algname} objective can be formulated as 
\begin{align}
\begin{split}
    \cL(\vA, \vbeta, m, \hat\vD) &= \sum_{i=1}^N \left( \frac{C}{\sqrt{m}} \sqrt{\sum_{j=1}^N \frac{\alpha_{ij}^2}{\beta_j}} + \sum_{j=1}^N \alpha_{ij} \hat{D}_{ij} \right) \\
    &= \frac{C}{\sqrt{m}} \sum_{i=1}^N \| \valpha_i \|_{\text{diag}(\vbeta)^{-1}}^2 + \vA \odot \hat\vD
\end{split}
\label{eq:collab_obj}
\end{align}
where $\odot$ is the element-wise product. In the next part, we propose an efficient optimizer to find a collaboration structure that minimizes the objective above. 

\subsection{{\algname} Optimizer}

Note that optimizing collaboration structure involves not only determining the objective but also how to optimize it. For example, while the objective is concise, constraints on $\vA$ in Eq. (\ref{eq:collab_mat}) make optimization challenging: the range of $\vA$ is discrete, and thus gradient descent cannot be directly used.

Therefore, we propose an efficient algorithm to solve the problem in discrete space. We optimize the coalition assignment $p(\cdot)$ which maps the client index to a coalition index (e.g., $p(1) = 2$ means assigning client 1 to coalition 2). We initialize the coalition assignment with local training, i.e., $p(i) = i, \forall i$, and iteratively assign clients to a new coalition that can further minimize the {\algname} objective in Eq (\ref{eq:collab_obj}). Algorithm \ref{alg:opt} gives the pseudo-code of the optimizer. 

\setlength{\textfloatsep}{5mm}
\begin{algorithm}[!t]
    \caption{{\algname} optimizer} \label{alg:opt}
    \begin{algorithmic}[1]
        \INPUT $\vbeta, m, \hat\vD$
        \OUTPUT Coalition assignment $p(\cdot)$
        \STATE Initialize $p(i) = i$ for all clients (local training)
        \WHILE{not converged}
            \FOR{client index $k$ in a permutation of $[1, \cdots, N]$}
                \STATE Evaluate the objective of Eq. (\ref{eq:collab_obj}) with the new collaboration structure after setting $p(k) = 1, \cdots, N$
                \STATE Update $p(k)$ to the coalition with lowest value of Eq. (\ref{eq:collab_obj})
            \ENDFOR
        \ENDWHILE
    \end{algorithmic}
\end{algorithm}

The optimizer guarantees to converge \add{to local optimum}, since the objective function has finite values and strictly decreases in each iteration. In practice, since greedy methods generally do not guarantee the \add{global} optimum, we re-run Algorithm \ref{alg:opt} multiple times with different random seeds to further refine the collaboration structure. Different from most CFL algorithms \cite{ifca,fesem}, where re-optimizing the collaboration structure requires re-training FL models and introduces large computation and communication costs, the collaboration optimization process of {\algname} is purely on the server and does not require training any ML model. As a result, our optimizer is very efficient and only takes a few seconds to run. 

\subsection{Training FL Models}

After solving the collaboration structure, {\algname} fixes the collaboration and trains FL models within each coalition separately. Notice that since the collaboration structure and the FL model are optimized independently, {\algname} can be seamlessly integrated with any GFL or PFL algorithms in this stage. 

\subsection{New Training Clients}

An additional advantage of {\algname} is that while typical cross-silo FL systems \cite{scaffold,mocha} are expensive to allow new clients to join after the training of FL models, our {\algname} framework allows new clients to join a cross-silo FL system without the need for re-clustering and re-training all FL models. In particular, {\algname} assigns new clients to existing coalitions that minimize the objective in Eq. (\ref{eq:collab_obj}) by estimating the distribution distance between the new client and existing clients, thus requiring only one coalition to fine-tune or re-train the FL model for each new client.

\section{Experiments}\label{sec:experiments}

\renewcommand{\meansd}[2]{$#1$ {\tiny($#2$)}}

In this section, we design experiments to answer the following research questions:
\begin{itemize}[itemsep=0pt,topsep=0pt] 
    \item \textbf{RQ1}: Can {\algname} alleviate negative transfer for both GFL and PFL? 
    \item \textbf{RQ2}: Can {\algname} provide better collaboration structures than previous CFL algorithms? 
    \item \textbf{RQ3} (hyperparameter): How do the choices of hyperparameter $C$ affect {\algname}? 
    \item \textbf{RQ4} (ablation study): How do different components contribute to the effectiveness of {\algname}?
    \item \textbf{RQ5}: Can {\algname} utilize new training clients? (see Appendix \ref{subsec:newclient})
    \item \textbf{RQ6} (convergence): Does {\algname} optimizer converge efficiently and effectively? (see Appendix \ref{subsec:converge})
\end{itemize}

\begin{table*}
\vspace{-2mm}
\setlength{\tabcolsep}{0.9mm}{
\caption{Alleviating negative transfer of base GFL and PFL algorithms with different models, datasets, and types of non-IIDness, where we report the mean and standard deviation for each evaluation metric in percentage ($\%$) after five runs. \label{tab:exp_nt}}
\vspace{1mm}
\begin{center}
\begin{small}
\begin{tabular}{l|ccc|ccc|ccc}
\toprule
\multirow{2}{*}{Method} & \multicolumn{3}{c|}{Label Shift (FashionMNIST)} & \multicolumn{3}{c|}{Feature Shift (CIFAR-10)} & \multicolumn{3}{c}{Concept Shift (CIFAR-100)} \\
 & Acc $\uparrow$ & IPR $\uparrow$ & RSD $\downarrow$ & Acc $\uparrow$ & IPR $\uparrow$ & RSD $\downarrow$ & Acc $\uparrow$ & IPR $\uparrow$ & RSD $\downarrow$ \\
\midrule
Local Train     & \meansd{86.05}{0.28}  & - & - 
                & \meansd{38.65}{0.44} & - & - 
                & \meansd{29.82}{0.56} & - & -  \\
\midrule
FedAvg          & \meansd{46.64}{0.12}  & \meansd{46.00}{2.24}  & \meansd{41.03}{0.24} 
                & \meansd{44.31}{0.98}  & \meansd{86.00}{4.18}  & \meansd{4.62}{0.58} 
                & \meansd{26.62}{0.12}  & \meansd{50.00}{0.00}  & \meansd{11.54}{0.45}  \\
\ +{\algname}   & \meansd{92.45}{0.07}  & \meansd{100.00}{0.00} & \meansd{5.99}{0.41} 
                & \meansd{52.61}{0.60}  & \meansd{100.00}{0.00} & \meansd{3.30}{0.63}
                & \meansd{40.94}{0.22}  & \meansd{100.00}{0.00} & \meansd{2.78}{0.30}   \\
\midrule
FedProx         & \meansd{46.70}{0.08}  & \meansd{45.00}{5.00}  & \meansd{41.09}{0.29}
                & \meansd{44.45}{0.58}  & \meansd{87.00}{4.47}  & \meansd{4.74}{0.56} 
                & \meansd{26.78}{0.14}  & \meansd{50.00}{0.00}  & \meansd{11.66}{0.36}   \\
\ +{\algname}   & \meansd{92.39}{0.15}  & \meansd{100.00}{0.00} & \meansd{6.02}{0.37} 
                & \meansd{52.73}{0.64}  & \meansd{100.00}{0.00}  & \meansd{3.16}{0.61} 
                & \meansd{40.99}{0.17}  & \meansd{100.00}{0.00} & \meansd{2.79}{0.34}\\
\midrule
FedNova         & \meansd{75.92}{1.14}  & \meansd{45.00}{3.54}  & \meansd{12.38}{1.25} 
                & \meansd{46.98}{0.57}  & \meansd{99.00}{2.24}  & \meansd{3.42}{0.22} 
                & \meansd{26.46}{0.13}  & \meansd{50.00}{0.00}  & \meansd{10.57}{0.32} \\
\ +{\algname}   & \meansd{92.47}{0.13}  & \meansd{100.00}{0.00} & \meansd{5.97}{0.39} 
                & \meansd{52.72}{0.57}  & \meansd{100.00}{0.00}  & \meansd{3.18}{0.63} 
                & \meansd{40.92}{0.36}  & \meansd{100.00}{0.00}  & \meansd{2.75}{0.43}\\
\midrule
Finetune        & \meansd{67.32}{3.17}  & \meansd{48.00}{2.74}  & \meansd{22.97}{2.82} 
                & \meansd{44.17}{0.99}  & \meansd{82.00}{2.74}  & \meansd{5.14}{0.32} 
                & \meansd{33.30}{4.79}  & \meansd{50.00}{0.00}  & \meansd{13.95}{0.57}\\
\ +{\algname}   & \meansd{92.57}{0.15}  & \meansd{99.00}{2.24}  & \meansd{6.07}{0.30} 
                & \meansd{51.53}{0.61}  & \meansd{100.00}{0.00}  & \meansd{2.92}{0.46} 
                & \meansd{40.94}{2.36}  & \meansd{100.00}{0.00}  & \meansd{2.54}{0.30} \\
\midrule
Per-FedAvg      & \meansd{51.13}{4.10}  & \meansd{49.00}{2.24}  & \meansd{37.35}{4.15}
                & \meansd{43.78}{0.69}  & \meansd{83.00}{9.08}  & \meansd{4.74}{0.65} 
                & \meansd{27.39}{0.24}  & \meansd{50.00}{0.00}  & \meansd{12.24}{0.46}  \\
\ +{\algname}   & \meansd{92.16}{0.25}  & \meansd{97.00}{6.71}  & \meansd{6.00}{0.25}
                & \meansd{52.64}{0.45}  & \meansd{100.00}{0.00}  & \meansd{3.03}{0.30} 
                & \meansd{41.04}{0.26}  & \meansd{100.00}{0.00} & \meansd{2.85}{0.49}   \\
\midrule
pFedMe          & \meansd{55.31}{3.45}  & \meansd{47.00}{4.47}  & \meansd{33.71}{3.11} 
                & \meansd{39.74}{0.85}  & \meansd{60.00}{12.25}  & \meansd{4.81}{0.74} 
                & \meansd{27.04}{0.39}  & \meansd{48.00}{2.74}  & \meansd{10.39}{0.47}\\
\ +{\algname}   & \meansd{92.18}{0.43}  & \meansd{99.00}{2.24}  & \meansd{6.40}{0.81} 
                & \meansd{47.20}{1.29}  & \meansd{97.00}{2.74}  & \meansd{3.02}{0.30} 
                & \meansd{37.47}{0.31}  & \meansd{100.00}{0.00}  & \meansd{3.04}{0.23}\\
\midrule
Ditto           & \meansd{68.73}{1.40}  & \meansd{48.00}{2.74}  & \meansd{20.29}{2.06} 
                & \meansd{47.04}{0.30}  & \meansd{97.00}{2.74}  & \meansd{3.85}{0.35} 
                & \meansd{32.50}{0.40}  & \meansd{50.00}{0.00}  & \meansd{12.22}{0.36}\\
\ +{\algname}   & \meansd{92.55}{0.08}  & \meansd{99.00}{2.24}  & \meansd{6.11}{0.30}
                & \meansd{50.97}{0.75}  & \meansd{99.00}{2.24}  & \meansd{3.38}{1.55} 
                & \meansd{40.33}{0.33}  & \meansd{100.00}{0.00}  & \meansd{2.16}{0.30}\\
\bottomrule
\end{tabular}
\end{small}
\end{center}
}
\vspace{-2mm}
\end{table*}


\subsection{Setup}
\label{sec:exp_setup}

\paragraph{Models and Datasets}
We evaluate our framework on three models and datasets: we train a 3-layer MLP for FashionMNIST \cite{fmnist}, a 5-layer CNN for CIFAR-10 \cite{cifar}, and an ImageNet pre-trained ResNet-18 \cite{resnet} for CIFAR-100 (with 20 coarse labels). We simulate three typical scenarios of non-IIDness \cite{advance} on three datasets respectively, to show that our algorithm can handle a wide range of non-IIDness. For all scenarios, we simulate 20 clients with four types. 
\begin{itemize}[itemsep=0pt,topsep=0pt]
    \item \textbf{Label shift} \cite{cfl_converge}. Each client has a different label distribution. Figure \ref{fig:labelshift} visualizes the label and quantity distribution for each client. Different from Dirichlet partition \cite{dirichlet}, where the distribution distance between any two clients has the same expectation, we create multiple levels of distribution distances. For example, client 0's label distribution is most close to clients 1-4, less close to clients 5-9, and very distinct to clients 10-19. 
    
    \item \textbf{Feature shift} \cite{ifca}. Each client's image is rotated for a given angle: $+25^{\circ}$, $-25^{\circ}$, $+155^{\circ}$, $-155^{\circ}$ for clients 0-4, 5-9, 10-14, and 15-19, respectively. Multiple levels of distribution distances also exist in this scenario: client 0's images have $0^{\circ}$ angle difference from client 1-4, $50^{\circ}$ from client 5-9, $130^{\circ}$ from client 10-14, and $180^{\circ}$ from client 15-19. 
    
    \item \textbf{Concept shift} \cite{cfl}. Each client's label indices are permuted with the order given in Figure \ref{fig:conceptshift}. Similar multiple levels of distribution distances are constructed: client 0 has all labels aligned with clients 1-4, 14 labels aligned with clients 5-9, and no label aligned with clients 10-19. 
\end{itemize}
To simulate quantity shift while remaining explainability, we let clients 0-9 be ``large'' clients with more data, and clients 10-19 be ``small'' clients with less data. As a result, the large clients are more picky, and perform the best when they only collaborate with the same type of client (e.g., client 0 performs the best within a coalition of 0-4). However, small clients will prefer larger coalitions (e.g., client 10 performs the best within a coalition of 10-19). 

\begin{figure}
\begin{center}
\includegraphics[width=0.8\columnwidth]{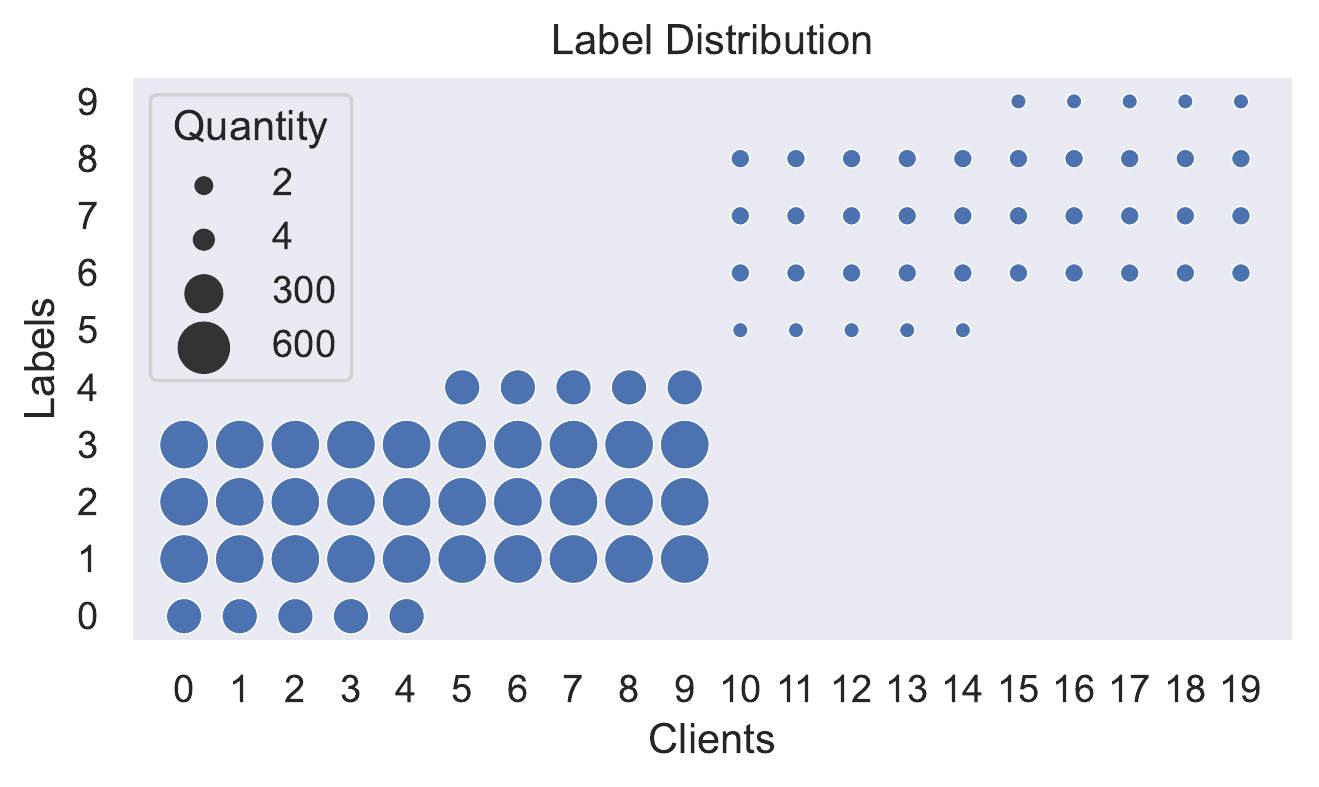}
\caption{Label and quantity distributions for label shift scenario. }
\label{fig:labelshift}
\end{center}
\end{figure}

\begin{figure}
\begin{center}
\includegraphics[width=0.85\columnwidth]{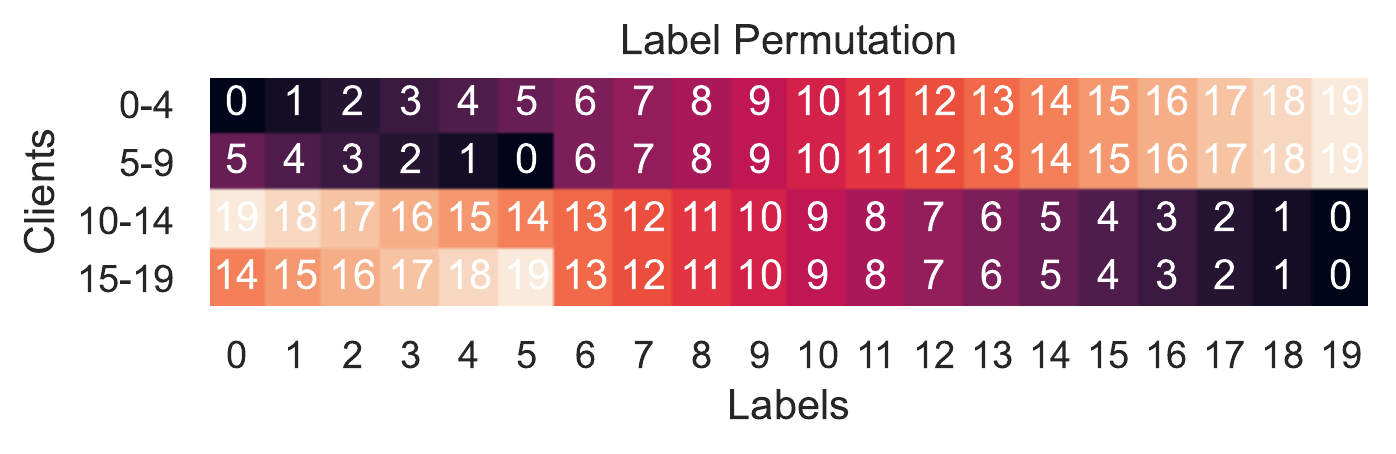}
\caption{Label permutation for concept shift scenario. }
\label{fig:conceptshift}
\end{center}
\end{figure}

\paragraph{Metrics}
To comprehensively evaluate the FL algorithms, besides the accuracy score (Acc), we use \textit{incentivized participation rate} (IPR) \cite{incfl} to evaluate how many clients get accuracy gains compared to local training, and \textit{reward standard deviation} (RSD) to evaluate the fairness of accuracy gains. Both metrics are defined with local model $\hat{h}_i^{\text{local}}$ and FL model $\hat{h}_i^{\text{FL}}$. 
\begin{align}
    \text{IPR} &= \frac{1}{N} \sum_{i=1}^N \bbI\{\text{acc}(\hat{h}_i^{\text{FL}}) - \text{acc}(\hat{h}_i^{\text{local}}) > 0\} \\
    \text{RSD} &= \text{SD}(\{\text{acc}(\hat{h}_i^{\text{FL}}) - \text{acc}(\hat{h}_i^{\text{local}})\}_{i=1}^N)
\end{align}
where $\text{SD}$ is the standard deviation. In an ideal FL system, all clients can get similar accuracy gains, which indicates a large IPR and small RSD. 

For all three datasets, we use a light-weight two-layer MLP as the client discriminator to estimate pairwise distribution distances. For CIFAR-10/CIFAR-100, we use an ImageNet pre-trained ResNet-18 to encode the raw image to 512 dimensions as a pre-processing step, before feeding it into the client discriminator. Notice that since the parameters of the ResNet-18 encoder is not trained or transmitted, it does not introduce any additional communication cost. 


\begin{table*}
\vspace{-2mm}
\setlength{\tabcolsep}{0.9mm}{
\caption{Comparison with Clustered FL \label{tab:exp_compare}}
\begin{center}
\vskip 2mm
\begin{small}
\begin{tabular}{l|ccc|ccc|ccc}
\toprule
\multirow{2}{*}{Method} & \multicolumn{3}{c|}{Label Shift (FashionMNIST)} & \multicolumn{3}{c|}{Feature Shift (CIFAR-10)} & \multicolumn{3}{c}{Concept Shift (CIFAR-100)} \\
 & Acc $\uparrow$ & IPR $\uparrow$ & RSD $\downarrow$ & Acc $\uparrow$ & IPR $\uparrow$ & RSD $\downarrow$ & Acc $\uparrow$ & IPR $\uparrow$ & RSD $\downarrow$ \\
\midrule
IFCA        & \meansd{91.49}{0.61}  & \meansd{95.00}{5.00}  & \meansd{5.62}{0.54} 
            & \meansd{49.78}{1.01}  & \meansd{100.00}{0.00}  & \meansd{3.13}{0.52} 
            & \meansd{30.74}{4.46}  & \meansd{60.00}{22.36} & \meansd{11.28}{5.04} \\
FedCluster  & \meansd{92.07}{0.47}  & \meansd{95.00}{7.07}  & \meansd{6.14}{0.49} 
            & \meansd{44.86}{1.90}  & \meansd{79.00}{17.10}  & \meansd{5.64}{1.81} 
            & \meansd{29.23}{2.18}  & \meansd{62.00}{12.55} & \meansd{9.55}{0.69} \\
FeSEM       & \meansd{56.79}{6.71}  & \meansd{45.00}{11.18}  & \meansd{36.12}{2.08} 
            & \meansd{42.73}{0.37}  & \meansd{82.00}{5.70}  & \meansd{4.10}{0.62}  
            & \meansd{31.92}{3.12}  & \meansd{72.00}{12.55}  & \meansd{9.81}{1.77} \\
KMeans      & \meansd{69.30}{0.81}  & \meansd{72.00}{2.74}  & \meansd{35.87}{1.22} 
            & \meansd{48.61}{1.15}  & \meansd{96.00}{4.18}  & \meansd{4.54}{0.74} 
            & \meansd{34.24}{3.01}  & \meansd{85.00}{13.69}  & \meansd{6.47}{3.06}\\
{\algname}  & \meansd{92.45}{0.07}  & \meansd{100.00}{0.00} & \meansd{5.99}{0.41} 
            & \meansd{52.61}{0.60}  & \meansd{100.00}{0.00} & \meansd{3.30}{0.63} 
            & \meansd{40.94}{0.22}  & \meansd{100.00}{0.00} & \meansd{2.78}{0.30}\\
\bottomrule
\end{tabular}
\end{small}
\end{center}
}
\end{table*}
\subsection{Alleviating Negative Transfer (RQ1)}
\label{subsec:alleviate}

We first show that while GFL and PFL algorithms suffer from negative transfer, after integrated with {\algname}, their negative transfer can be alleviated. We consider a wide range of SOTA GFL and PFL algorithms. For GFL, besides FedAvg \cite{fedavg}, we also compare to FedProx \cite{fedprox} (for better stability to non-IIDness) and FedNova \cite{fednova} (for more consistent objective under quantity shift). For PFL, we include Finetune (where each client locally finetunes the FedAvg model), a meta-learning-based method Per-FedAvg \cite{per-fedavg}, a regularization-based method pFedMe \cite{pfedme}, and a fair-and-robust method Ditto \cite{ditto}. 

We report the results in Table \ref{tab:exp_nt}. Across datasets, models and types of non-IIDness, our proposed {\algname} strongly enhances the performance of all seven base FL algorithms in terms of accuracy, IPR and fairness (RSD). In the label shift and concept shift scenarios, all the base GFL and PFL algorithms strongly suffer from negative transfer: more than half of the clients (mostly the small clients) receive a FL model worse than local model. Although PFLs introduce accuracy gain compared to FedAvg, they do not solve the negative transfer problem since small clients still do not benefit from FL. However, when combined with our {\algname}, all base FL algorithms can reach a near $100\%$ IPR with much better accuracy and reward fairness. 

In the feature shift scenarios, since rotation is a mild kind of non-IIDness also used for data augmentation, base FL algorithms suffer less from negative transfer compared to the other two scenarios: all the base FL algorithms get accuracy gain in average. Our {\algname} framework can further boost these FL algorithms to the next level, also reach a near $100\%$ IPR with significantly better accuracy and reward fairness. 

It is also interesting to notice that after combining with our {\algname} framework, four PFL algorithms have limited or no accuracy gain compared to GFL algorithms. This enlightens us that ``who to collaborate'' may be more important than ``how to collaborate'', and should be considered first. 


\subsection{Comparison to other CFL Algorithms (RQ2)}
\label{sec:exp_compare}

In this part, we compare our {\algname} algorithm (combined with FedAvg) to baseline CFL algorithms, including one loss-based algorithm IFCA \cite{ifca}, one gradient-based algorithm FedCluster \cite{cfl}, and two parameter-based algorithms FeSEM \cite{fesem} and KMeans \cite{kmeans}. 
We report the results in Table \ref{tab:exp_compare}. Across all scenarios, {\algname} has the highest accuracy and IPR, with RSD among the lowest. Besides numerical results, we further study why {\algname} has better performance than baseline CFL methods. 

\paragraph{Quantity Awareness}
While {\algname} explicitly uses the quantity distribution $\vbeta$ during collaboration optimization, all four baseline CFL algorithms cannot utilize the quantity information. For example, IFCA uses training losses to choose the model (cluster), which is not sensitive to the quantities. Therefore, it usually results in two clusters: 0-9 and 10-19, without further splitting the ``large'' clients. 

\begin{figure}
\begin{center}
\centerline{\includegraphics[width=1.0\columnwidth]{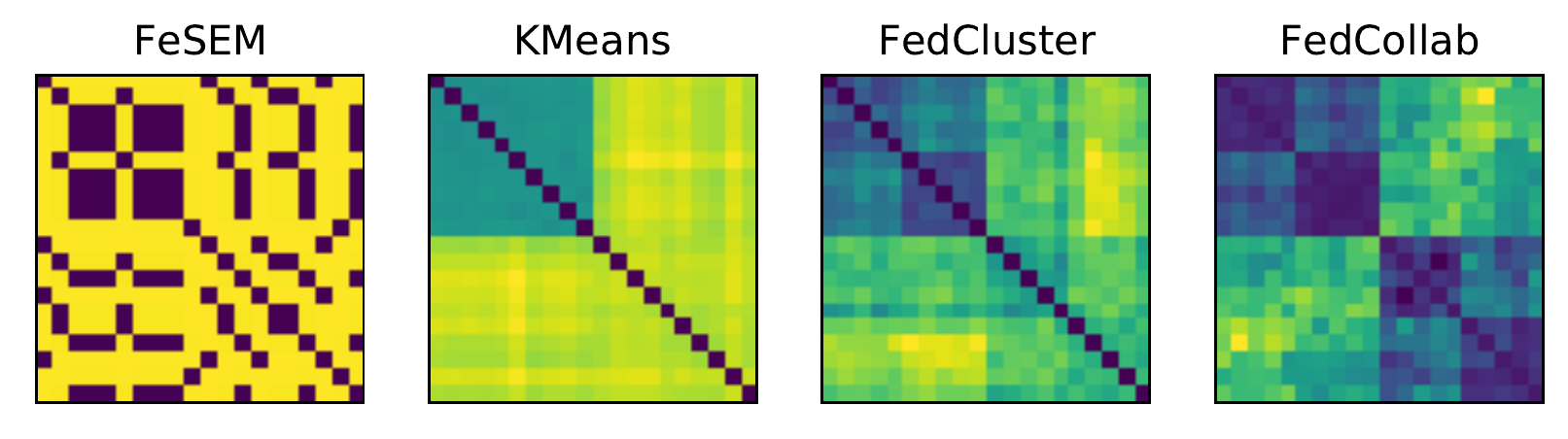}}
\caption{Client distance matrices on CIFAR-10 with feature shift. }
\label{fig:distance}
\end{center}
\end{figure}

\paragraph{Distribution Distances}
Apart from the quantity awareness, our {\algname} framework estimates high-quality distribution distances. Notice that FeSEM and KMeans rely on the distance between model parameters, while FedCluster relies on gradient similarity matrix $\vS$. We visualize the distance matrix of {\algname} and these baselines in Figure \ref{fig:distance} (for FedCluster we show $1 - \vS$). It can be seen that the distance matrix of FeSEM is highly random depending on the initialization. KMeans gives some meaningful estimations, but the distance between two clients with the same underlying distribution is still high. While FedCluster gives the best estimation among baselines, the estimated distribution distance of {\algname} clearly reveals the multi-level distribution distances we construct. 

\add{Besides the performance, we also point out that while IFCA, FedCluster and FeSEM perform clustering \textit{during FL}, KMeans and {\algname} perform clustering \textit{before FL}. We compare these two types of CFL in Appendix \ref{subsec:cluster_fl}.} 


\subsection{Effects of Hyperparameter (RQ3)}
\label{sec:exp_hyperparameter}

Our algorithm has a hyperparameter $C$ that balances generalization error and dataset shift. We study the effect of $C$ with results shown in Figure \ref{fig:hyper}. When $C = 6, 8, 10$, {\algname} gives the same collaboration structure with the highest accuracy. When we decrease $C$, the solved collaboration structure changes from coarse to fine, and finally to local training when $C = 0$. On the other hand, when $C$ goes to infinity, the solved collaboration structure changes to global training, which suffers from negative transfer. 

\begin{figure}
\begin{center}
\centerline{\includegraphics[width=0.8\columnwidth]{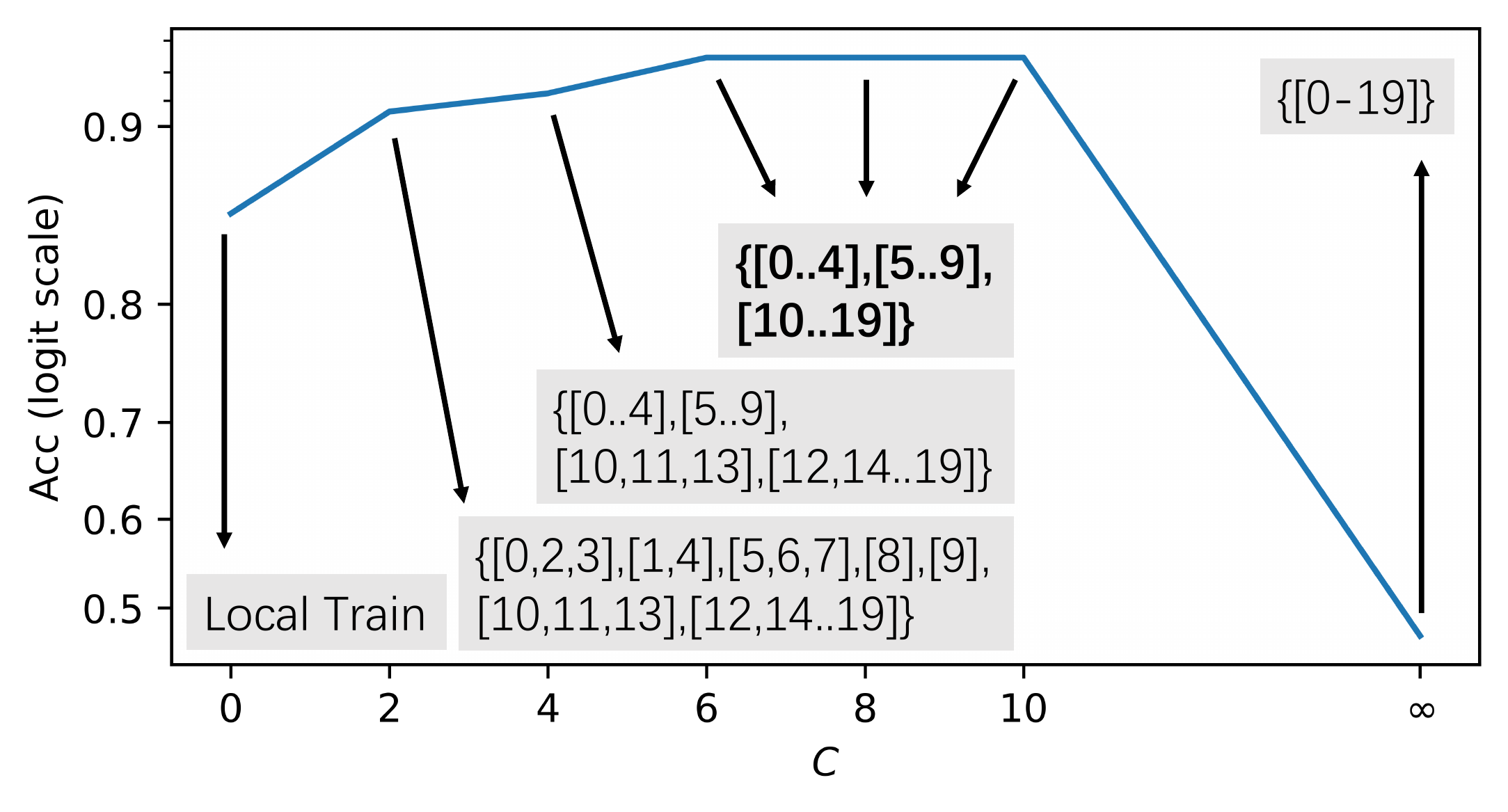}}
\caption{Effects of $C$ on FashionMNIST with label shift. }
\label{fig:hyper}
\end{center}
\vspace{-2mm}
\end{figure}


\subsection{Ablation Study (RQ4)}
\label{sec:exp_ablation}

In this part, we show that both distribution distances and quantity contribute to the optimization of collaboration structure. To this end, we consider two variants of {\algname}. With dataset untouched, ``ignore quantities'' replaces the real quantity distribution $\vbeta$ with a uniform vector $\frac{1}{N} \boldsymbol{1}$, while ``ignore distances'' replaces the non-diagonal elements in the estimated distribution distance matrix $\hat\vD$ with their average. 

Table \ref{tab:exp_ablation} summarizes the results of the ablation study. When ignoring distances, we observe that {\algname} assigns clients with no overlapping labels to the same coalition, which results in worse performance. When ignoring quantities, we observe that {\algname} forms multiple coalitions for small clients, instead of a large coalition for clients 10-19. Therefore, small clients get smaller performance gains compared to the original {\algname}. 

\begin{table}[H]
\caption{Ablation study on FashionMNIST with label shift \label{tab:exp_ablation}}
\setlength{\tabcolsep}{0.9mm}{
\begin{center}
\begin{small}
\begin{tabular}{l|ccc}
\toprule
Method & Acc $\uparrow$ & IPR $\uparrow$ & RSD $\downarrow$ \\
\midrule
{\algname}          & \meansd{92.45}{0.07}  & \meansd{100.00}{0.00} & \meansd{5.99}{0.41} \\
Ignore quantities   & \meansd{90.31}{0.11}  & \meansd{94.00}{5.48}  & \meansd{4.30}{0.46} \\
Ignore distances    & \meansd{67.79}{0.89}  & \meansd{19.00}{4.18}  & \meansd{18.97}{0.83} \\
\bottomrule
\end{tabular}
\end{small}
\end{center}
}
\end{table}

\section{Conclusion}\label{sec:conclusion}

We present {\algname}, a CFL framework that alleviates negative transfer in FL. Inspired by our derived generalization error bound for FL clients,  {\algname} utilizes both quantity and distribution distance information to optimize the collaboration structure among clients. Extensive experiments demonstrate that {\algname} can boost the accuracy, incentivized participation rate and fairness of a wide range of GFL and PFL algorithms and a variety of non-IIDness. Moreover, {\algname} significantly outperforms state-of-the-art clustered FL algorithms in optimizing the collaboration structure among clients.

\section*{Acknowledgement}

This work is supported by National Science Foundation under Award No. IIS-1947203, IIS-2117902, IIS-2137468, and Agriculture and Food Research Initiative (AFRI) grant no. 2020-67021-32799/project accession no.1024178 from the USDA National Institute of Food and Agriculture. The views and conclusions are those of the authors and should not be interpreted as representing the official policies of the funding agencies or the government.

\bibliography{references}
\bibliographystyle{icml2023}

\newpage
\appendix
\onecolumn

\section{Proofs}

\setcounter{mainsection}{3}

\subsection{Proof of Theorem \ref{thm:error}}\label{sec:proof_of_theorem_3_1}

In this part, we give the proof of Theorem \ref{thm:error2}. We first formally define the quantity-aware function $\phi_{|\cH|}(\valpha_i, \vbeta, m, \delta)$ in Definition \ref{def:gen2} and the distribution difference term $D(\cD_i, \cD_j)$ in Definition \ref{def:dif2}

\subsubsection{Quantity-aware function}

\begin{maindefinition}[Quantity-aware function] \label{def:gen2}
    For a given hypothesis space $\cH$, fixed combination weights $\valpha_i$, quantity distribution $\vbeta$, total quantity $m$, for any $\delta \in (0, 1)$, with probability at least $1 - \delta$ (over the choice of the samples), the following holds for all $h \in \cH$, 
    \begin{align}
        | \hat\epsilon_{\valpha_i}(h) - \epsilon_{\valpha_i}(h) | \leq \phi_{|\cH|}(\valpha_i, \vbeta, m, \delta)
    \end{align}
\end{maindefinition}

\begin{remark}
    Definition \ref{def:gen2} is an abstract form of the difference between $\epsilon_{\valpha_i}(h)$, the expected loss on the mixture population distribution $\sum_{j=1}^N \alpha_{ij} \cD_j$, and $\hat\epsilon_{\valpha_i}(h)$, the empirical risk on finite samples drawn from the mixture empirical distribution $\sum_{j=1}^N \alpha_{ij} \hat\cD_j$. It can be instantiated with traditional generalization error bounds. In the main text we give an example with VC dimension \cite{ben2010theory}: 
    \begin{align}
        \phi_{|\cH|}^{\text{VC}}(\valpha_i, \vbeta, m, \delta) = \sqrt{\left( \sum_{j=1}^N \frac{\alpha_{ij}^2}{\vbeta_j} \right) \left( \frac{2d \log (2m + 2) + \log(4 / \delta)}{m} \right)} \label{def:vc2}
    \end{align}
    Another choice is using weighted Rademacher complexity \cite{weighted_rad}, which gives a similar form of the bound. 
    \begin{align}
        \phi_{|\cH|}^{\text{Rad}}(\valpha_i, \vbeta, m, \delta) = \hat{R}_{\valpha_i}(\cH) + 3 \sqrt{\frac{m}{2}\left( \max_{1 \leq j \leq N}\frac{\alpha_{ij}}{m_i}\right)^2 \log \left( \frac{2}{\delta}\right)}
    \end{align}
    where 
    \begin{align}
    \hat{R}_{\valpha_i}(\cH) = \bbE_{\vsigma \in \{\pm 1\}^m} \sup_{h \in \cH} 2\sum_{j=1}^N \frac{\alpha_{ij}}{m_j}\sum_{k=1}^{m_i} \sigma_{j, k} \ell(h(\vx_k^{(j)}), \vy_k^{(j)})
    \end{align}
    It can be transformed into a similar form. Denote $\hat{R}_{j}(\cH) = \bbE_{\vsigma_j \in \{\pm 1\}^{m_j}} \sup_{h \in \cH} 2 \sum_{k=1}^{m_i} \sigma_{j, k} \ell(h(\vx_k^{(j)}), \vy_k^{(j)})$ be the empirical Rademacher complexity of client $j$ with order $\cO(\frac{1}{\sqrt{m_j}})$ \cite{shalev2014understanding}, we have
    \begin{align}
        \hat{R}_{\valpha_i}(\cH) &\leq  \sum_{j=1}^N \alpha_{ij}\hat{R}_{j}(\cH) 
        \leq \sqrt{N \sum_{j=1}^N \alpha_{ij}^2(\hat{R}_{j}(\cH))^2} 
        \leq \sqrt{N \sum_{j=1}^N \alpha_{ij}^2 \cdot \left(\frac{C}{\sqrt{m_j}}\right)^2 }
        = \sqrt{\left(\sum_{j=1}^N \frac{\alpha_{ij}^2}{\vbeta_j} \right) \cdot \frac{N C^2}{m}} , \quad \exists C > 0 
    \end{align}
\end{remark}

\subsubsection{Distribution differences}

\begin{maindefinition}[Distribution differences] \label{def:dif2}
    For a given hypothesis space $\cH$, two distributions $\cD_i, \cD_j$, the following holds for all $h \in \cH$, 
    \begin{align}
        | \epsilon_i(h) - \epsilon_j(h) | \leq D(\cD_i, \cD_j)
    \end{align}
\end{maindefinition}

\begin{remark}
    Definition \ref{def:dif2} also can be instantiated with different distribution distances. In the main text we focus on $\cC$-divergence \cite{label_disc,c_divergence}, which utilizes both feature and label information. 
    \begin{align}
        D^{\cC}(\cD_i, \cD_j) = \max_{h \in \cH} | \epsilon_i(h) - \epsilon_j(h) | 
    \end{align}
    Another common choice is using $\cH \Delta \cH$-distance \cite{ben2010theory}. Denote $\cX_i, \cX_j$ as the marginal feature distributions of $\cD_i, \cD_j$, respectively, 
    \begin{align}
        D^{\cH\Delta\cH}(\cD_i, \cD_j) = \frac{1}{2} d_{\cH \Delta \cH} (\cX_i, \cX_j) + \lambda_{ij}
    \end{align}
    where $\lambda_{ij} = \min_{h \in \cH} (\epsilon_i(h) + \epsilon_j(h) )$ is assumed to be small and $d_{\cH \Delta \cH} (\cX_i, \cX_j)$ can be estimated with a client discriminator using only feature as input. 
\end{remark}

\subsubsection{Error Upper Bound}

\begin{lemma}[Error decomposition] \label{lemma:err_decomp}
    For all $h \in \cH$, denote $\epsilon_{\valpha_i}(h) = \sum_{j=1}^N \alpha_{ij} \epsilon_{\alpha_j}(h)$, 
    \begin{align*}
        \left| \epsilon_i(h) - \epsilon_{\valpha_i}(h) \right| 
        &= \left|\sum_{j=1}^N \alpha_{ij} \epsilon_i(h) - \sum_{j=1}^N \alpha_{ij} \epsilon_j(h) \right|  \\
        &\leq \sum_{j\neq i} \alpha_{ij}\left| \epsilon_i(h) - \epsilon_j(h) \right| \\
        &\leq \sum_{j\neq i} \alpha_{ij} D(\cD_i, \cD_j) \tag{Definition \ref{def:dif2}} 
    \end{align*}
\end{lemma}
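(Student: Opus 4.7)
The plan is to reduce the claim to the triangle inequality after rewriting the left-hand side as a weighted difference of expected risks. The one observation I will rely on at the outset is that the combination weights $\valpha_i$ defined in Eq.~(\ref{eq:cfl_loss}) form a probability vector, i.e., $\sum_{j=1}^N \alpha_{ij} = 1$ with $\alpha_{ij}\ge 0$, since within the coalition $\cC$ containing client $i$ the weights are proportional to $\beta_j$ and normalized by $\sum_{k\in\cC}\beta_k$. Without this normalization the identity in the next step would not hold, so I would make it explicit before doing anything else.

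With normalization in hand, the first step is to rewrite the constant $\epsilon_i(h)$ as its own convex combination, $\epsilon_i(h)=\sum_{j=1}^N \alpha_{ij}\,\epsilon_i(h)$, and regroup term by term to obtain
\[
\epsilon_i(h)-\epsilon_{\valpha_i}(h)\;=\;\sum_{j=1}^N \alpha_{ij}\bigl(\epsilon_i(h)-\epsilon_j(h)\bigr).
\]
The second step is to apply the triangle inequality — legitimate because each $\alpha_{ij}\ge 0$ — to push the absolute value inside the sum, yielding $\lvert\epsilon_i(h)-\epsilon_{\valpha_i}(h)\rvert \le \sum_{j=1}^N \alpha_{ij}\,\lvert\epsilon_i(h)-\epsilon_j(h)\rvert$. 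The $j=i$ summand vanishes identically, so the index set collapses to $j\neq i$. The third and final step is to invoke Definition~\ref{def:dif2} pointwise, replacing each surviving $\lvert\epsilon_i(h)-\epsilon_j(h)\rvert$ with its upper bound $D(\cD_i,\cD_j)$, which delivers the stated inequality.

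The main obstacle is essentially nonexistent: the entire argument consists of (a) inserting the convex-combination representation of the constant $\epsilon_i(h)$, (b) a single application of the triangle inequality, and (c) a direct appeal to the definition of the distribution-difference term. The only subtlety is verifying normalization and non-negativity of $\valpha_i$ before letting the absolute value pass through the sum; this is immediate from the coalition-normalized construction in Eq.~(\ref{eq:cfl_loss}), and the rest is routine.
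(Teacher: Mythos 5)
Your proposal is correct and follows exactly the paper's argument: the paper's own derivation (embedded in the lemma's display) rewrites $\epsilon_i(h)$ as the convex combination $\sum_{j=1}^N \alpha_{ij}\,\epsilon_i(h)$ using $\sum_j \alpha_{ij}=1$, applies the triangle inequality with nonnegative weights, drops the vanishing $j=i$ term, and invokes Definition~\ref{def:dif2}. Your explicit verification of the normalization and nonnegativity of $\valpha_i$ is a reasonable (if routine) addition that the paper leaves implicit.
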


\begin{maintheorem}\label{thm:error2}
Let $\hat{h}_{\valpha_i}$ be the empirical risk minimizer defined in Eq. (\ref{eq:cfl_loss}) and $h_i^*$ be client $i$'s expected risk minimizer. For any $\delta \in (0, \frac{1}{2})$, with probability at least $1 - 2 \delta$, the following holds
\begin{align}
    \epsilon_i(\hat{h}_{\valpha_i}) \leq \epsilon_i( h_i^* ) & + 2 \phi_{|\cH|}(\valpha_i, \vbeta, m, \delta) + 2 \sum_{j \neq i} \alpha_{ij} D(\cD_i, \cD_j)
\end{align}
\end{maintheorem}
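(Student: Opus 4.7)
The strategy is the standard ERM decomposition adapted to the multi-source weighted setting: interpolate between the target risk $\epsilon_i$ and the mixture risk $\epsilon_{\valpha_i}(h) = \sum_{j=1}^N \alpha_{ij}\epsilon_j(h)$, so that the finite-sample generalization error and the distribution-shift error can be separated and handled by Definition~\ref{def:gen} and Definition~\ref{def:dif} respectively.

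First I would write the telescoping identity
\begin{align*}
\epsilon_i(\hat h_{\valpha_i}) - \epsilon_i(h_i^*)
&= \bigl[\epsilon_i(\hat h_{\valpha_i}) - \epsilon_{\valpha_i}(\hat h_{\valpha_i})\bigr] \\
&\quad + \bigl[\epsilon_{\valpha_i}(\hat h_{\valpha_i}) - \epsilon_{\valpha_i}(h_i^*)\bigr] \\
&\quad + \bigl[\epsilon_{\valpha_i}(h_i^*) - \epsilon_i(h_i^*)\bigr],
\end{align*}
and bound each of the three brackets in turn.

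For the first and third brackets, I would use $\sum_{j=1}^N \alpha_{ij}=1$ to rewrite $\epsilon_i(h) - \epsilon_{\valpha_i}(h) = \sum_{j\neq i} \alpha_{ij}\bigl[\epsilon_i(h) - \epsilon_j(h)\bigr]$, apply the triangle inequality, and invoke Definition~\ref{def:dif} on each summand to replace $|\epsilon_i(h) - \epsilon_j(h)|$ by $D(\cD_i,\cD_j)$. Each bracket thus contributes at most $\sum_{j\neq i}\alpha_{ij}D(\cD_i,\cD_j)$, which together give the factor of $2$ in the theorem. Note that this step is purely deterministic and does not consume any of the failure probability.

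For the middle bracket, I would run the textbook ERM chain, writing $\phi$ for $\phi_{|\cH|}(\valpha_i,\vbeta,m,\delta)$:
\begin{align*}
\epsilon_{\valpha_i}(\hat h_{\valpha_i})
\le \hat\epsilon_{\valpha_i}(\hat h_{\valpha_i}) + \phi
\le \hat\epsilon_{\valpha_i}(h_i^*) + \phi
\le \epsilon_{\valpha_i}(h_i^*) + 2\phi.
\end{align*}
The middle inequality is the defining property of $\hat h_{\valpha_i}$ as the empirical mixture-risk minimizer from Eq.~(\ref{eq:cfl_loss}); the outer two inequalities are instances of the quantity-aware deviation bound from Definition~\ref{def:gen}. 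A union bound over the two high-probability events (one for $\hat h_{\valpha_i}$ and one for $h_i^*$) gives the $1-2\delta$ confidence in the theorem.

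Collecting the three bounds and rearranging yields the claimed inequality. The only nontrivial ingredient is the choice of $\epsilon_{\valpha_i}$ as the bridging quantity; once that is in place the rest is routine. I do not expect any genuine obstacle beyond taking care that the summation over $j$ runs from $1$ to $N$ so that $j=i$ is automatically absorbed into the mixture risk but contributes zero to the distribution-shift bound.
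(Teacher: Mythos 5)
Your proposal is correct and follows essentially the same route as the paper's proof: both bridge through the mixture risk $\epsilon_{\valpha_i}$, apply the distribution-difference bound (Definition~\ref{def:dif}) once on each side of the ERM step, use the quantity-aware deviation bound (Definition~\ref{def:gen}) twice together with the empirical-minimizer property for the middle term, and account for the two uses of the high-probability bound via a union bound to get confidence $1-2\delta$. The paper merely writes this as a single chain of inequalities rather than a three-bracket telescoping identity; the content is identical.
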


\begin{proof}
\begin{align*}
        \epsilon_i(\hat h_{\valpha_i}) 
        &\leq \epsilon_{\valpha_i}(\hat h_{\valpha_i}) + \left| \epsilon_{\valpha_i}(\hat h_{\valpha_i}) - \epsilon_i(\hat h_{\valpha_i}) \right| \\
        &\leq \epsilon_{\valpha_i}(\hat h_{\valpha_i}) + \sum_{j \neq i} \alpha_{ij} D(\cD_i, \cD_j) \tag{Lemma \ref{lemma:err_decomp}} \\
        &\leq \hat\epsilon_{\valpha_i}(\hat h_{\valpha_i}) + \left| \epsilon_{\valpha_i}(\hat h_{\valpha_i}) - \hat\epsilon_{\valpha_i}(\hat h_{\valpha_i}) \right| + \sum_{j \neq i} \alpha_{ij} D(\cD_i, \cD_j) \\
        &\leq \hat\epsilon_{\valpha_i}(\hat h_{\valpha_i}) + \phi_{|\cH|}(\valpha_i, \vbeta, m, \delta) + \sum_{j \neq i} \alpha_{ij} D(\cD_i, \cD_j) \tag{Definition \ref{def:gen2}, hold with probability $1 - \delta$}\\
        &\leq \hat\epsilon_{\valpha_i}(h_i^*) + \phi_{|\cH|}(\valpha_i, \vbeta, m, \delta) + \sum_{j \neq i} \alpha_{ij} D(\cD_i, \cD_j) \tag{Empirical Minimizer} \\
        &\leq \epsilon_{\valpha_i}(h_i^*) + \left| \epsilon_{\valpha_i}(h_i^*) - \hat\epsilon_{\valpha_i}(h_i^*) \right| + \phi_{|\cH|}(\valpha_i, \vbeta, m, \delta) + \sum_{j \neq i} \alpha_{ij} D(\cD_i, \cD_j) \\
        &\leq \epsilon_{\valpha_i}(h_i^*) + 2\phi_{|\cH|}(\valpha_i, \vbeta, m, \delta) + \sum_{j \neq i} \alpha_{ij} D(\cD_i, \cD_j) \tag{Definition \ref{def:gen2}, hold with probability $1 - \delta$}\\
        &\leq \epsilon_i(h_i^*) + \left| \epsilon_i(h_i^*) - \epsilon_{\valpha_i}(h_i^*) \right| + 2\phi_{|\cH|}(\valpha_i, \vbeta, m, \delta) + \sum_{j \neq i} \alpha_{ij} D(\cD_i, \cD_j)\\
        &\leq  \epsilon_i(h_i^*) + 2\phi_{|\cH|}(\valpha_i, \vbeta, m, \delta) + 2 \sum_{j \neq i} \alpha_{ij} D(\cD_i, \cD_j) \tag{Lemma \ref{lemma:err_decomp}} 
    \end{align*}
    Notice that we use the generalization error bound twice, so the bound holds with probability at least $1 - 2\delta$ instead of $1 - \delta$. 
\end{proof}

\subsection{Proof of Corollaries \ref{crl:special} and \ref{crl:general}}

\subsubsection{Proof of Corollary \ref{crl:special}(1) }

\begin{maincorollary}[1] \label{crl:special:1}
When using VC-dimension bound (\ref{def:vc2}) as the quantity aware function, if $D(\cD_i, \cD_j) = 0, \forall i, j$, GFL minimizes the error bound with $\alpha_{ij} = \beta_j, \forall {j}$. 
\end{maincorollary}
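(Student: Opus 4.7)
The plan is to reduce the error bound from Theorem \ref{thm:error} to a single optimization problem over $\valpha_i$ and then solve that problem via the Cauchy--Schwarz inequality. First I would observe that under the hypothesis $D(\cD_i, \cD_j) = 0$ for all $i,j$, the dataset-shift term $2\sum_{j\neq i}\alpha_{ij}D(\cD_i,\cD_j)$ in Theorem \ref{thm:error} vanishes identically, and the minimal-error term $\epsilon_i(h_i^*)$ is independent of $\valpha_i$. Thus minimizing the bound reduces to minimizing $\phi_{|\cH|}(\valpha_i, \vbeta, m, \delta)$ over admissible $\valpha_i$ lying in the simplex $\{\valpha_i : \alpha_{ij}\geq 0,\ \sum_j \alpha_{ij} = 1\}$.

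Next I would plug in the explicit VC form from equation (\ref{def:vc}) and pull out the quantities that do not depend on $\valpha_i$. Since the bracket $(2d\log(2m+2)+\log(4/\delta))/m$ is constant in $\valpha_i$ and the square root is monotone, the task collapses to minimizing $\sum_{j=1}^N \alpha_{ij}^2/\beta_j$ subject to $\sum_j \alpha_{ij}=1$, $\alpha_{ij}\ge 0$.

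The key step is to invoke Cauchy--Schwarz with the decomposition $\alpha_{ij} = (\alpha_{ij}/\sqrt{\beta_j})\cdot\sqrt{\beta_j}$, giving
\[
1 \;=\; \Bigl(\sum_{j=1}^N \alpha_{ij}\Bigr)^{\!2} \;\le\; \Bigl(\sum_{j=1}^N \frac{\alpha_{ij}^2}{\beta_j}\Bigr)\Bigl(\sum_{j=1}^N \beta_j\Bigr) \;=\; \sum_{j=1}^N \frac{\alpha_{ij}^2}{\beta_j},
\]
where I used that $\vbeta$ also sums to one. Equality in Cauchy--Schwarz holds iff $\alpha_{ij}/\sqrt{\beta_j} \propto \sqrt{\beta_j}$, i.e.\ $\alpha_{ij} = c\beta_j$ for a constant $c$; the simplex constraint forces $c=1$, so the unique minimizer is $\alpha_{ij}=\beta_j$, which is precisely the GFL weighting. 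I would then conclude that this choice attains the minimum $\phi_{|\cH|}^{\text{VC}}$ and therefore the minimum of the whole bound, completing the proof.

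I do not expect a serious obstacle here: the vanishing of the shift term is immediate, and the constrained minimization is a textbook Cauchy--Schwarz (or equivalently Lagrange multipliers) argument. The only mild subtlety is to note that non-negativity of the $\alpha_{ij}$ is automatically satisfied by the candidate $\alpha_{ij}=\beta_j\ge 0$, so the unconstrained (equality-only) optimum is feasible and hence globally optimal on the simplex.
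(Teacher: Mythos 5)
Your proof is correct, and it follows the paper's reduction exactly up to the last step: drop the vanishing shift term and the constant $\epsilon_i(h_i^*)$, note the square root and the $\valpha_i$-independent factor are monotone/constant, and minimize $\sum_{j=1}^N \alpha_{ij}^2/\beta_j$ over the simplex. The only divergence is the certificate of optimality. You use Cauchy--Schwarz with the split $\alpha_{ij} = (\alpha_{ij}/\sqrt{\beta_j})\sqrt{\beta_j}$, which lower-bounds the objective by $1$ and, via the equality condition, identifies $\alpha_{ij}=\beta_j$ as the (unique, since every $\beta_j>0$) minimizer. The paper instead completes the square, writing $\sum_{j}\alpha_{ij}^2/\beta_j = \chi^2(\valpha_i\,\|\,\vbeta) + 1$ and invoking nonnegativity of the $\chi^2$-divergence, which vanishes iff $\valpha_i=\vbeta$. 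The two arguments are essentially interchangeable one-liners; yours makes uniqueness explicit through the Cauchy--Schwarz equality case, while the paper's rewriting buys the interpretation that the VC quantity-aware term is, up to a monotone transformation, the $\chi^2$-divergence between the collaboration weights and the quantity distribution, which is the cleaner conceptual takeaway. Your closing remark that the candidate $\alpha_{ij}=\beta_j$ is feasible (nonnegative, sums to one) is the right thing to check and matches the implicit treatment in the paper.
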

\begin{proof}
    In the first case, $D(\cD_i, \cD_j)=0$. Let $Q=\sqrt{\frac{2d \log (2m + 2) + \log(4 / \delta)}{m}} $. Then
    \begin{align*}
        f(\valpha_i) &= 2 \phi_{|\cH|}(\valpha_i, \vbeta, m, \delta) = 2Q \sqrt{ \sum_{j=1}^N \frac{\alpha_{ij}^2}{\beta_j} } \\
        &= 2Q \sqrt{ \sum_{j=1}^N \frac{(\alpha_{ij} - \beta_j)^2}{\beta_j} + 1 } \\
        &= 2Q \sqrt{ \chi^2 \left( \valpha_i || \vbeta \right) + 1 }
    \end{align*}
    which achieves the minimum at $\valpha_i = \vbeta$.
\end{proof}

\subsubsection{Proof of Corollary \ref{crl:general} and \ref{crl:special}(2)}

\begin{maincorollary} \label{crl:general2}
When using VC-dimension bound (\ref{def:vc2}) as the quantity aware function, for a client $i$ with $m_i$ samples, if its coalition $\cC$ minimizes the error bound of Theorem \ref{thm:error}, then $\cC$ does not include any clients with distribution distance $D(\cD_i, \cD_j) > D_\text{thr}$, where $D_\text{thr} = \frac{\sqrt{2d \log (2m + 2) + \log (4  / \delta)}\sqrt{m}}{2 m_i}$.
\end{maincorollary}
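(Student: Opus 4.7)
The plan is to prove the contrapositive by a removal argument: if the error-bound minimizer $\cC^*$ for client $i$ contains any client at distance $> D_\text{thr}$, removing a carefully chosen such client strictly decreases the bound, contradicting optimality. Throughout I abbreviate $D_{ij} := D(\cD_i, \cD_j)$ and $Q := \sqrt{(2d\log(2m+2)+\log(4/\delta))/m}$, and note that $D_\text{thr} = Q/(2\beta_i)$.

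First, I specialize the bound of Theorem~\ref{thm:error} to coalition-induced weights $\alpha_{ij} = \beta_j/B$ for $j \in \cC$ and zero otherwise. For a coalition $\cC \ni i$ with $B := \sum_{j \in \cC}\beta_j$, the VC-based quantity-aware term collapses to $Q/\sqrt{B}$ and the shift term to $\frac{1}{B}\sum_{j \in \cC, j \neq i}\beta_j D_{ij}$. Dropping the constant $\epsilon_i(h_i^*)$ and the overall factor of $2$, I write the per-client objective as
\[
    g(\cC) := \frac{Q}{\sqrt{B}} + \frac{1}{B}\sum_{j \in \cC,\, j \neq i}\beta_j D_{ij}.
\]

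Suppose for contradiction $\cC^*$ minimizes $g$ yet contains some $j$ with $D_{ij} > D_\text{thr}$; pick $j_0 \in \argmax_{j \in \cC^*,\, j \neq i} D_{ij}$, so $D_{ij_0} > D_\text{thr}$. Set $\cC' := \cC^* \setminus \{j_0\}$, $B' := B - \beta_{j_0}$, and $\bar D' := \frac{1}{B'}\sum_{j \in \cC',\, j \neq i}\beta_j D_{ij}$. Combining the two fractions over the common denominator $BB'$ and telescoping the numerator yields
\[
    g(\cC^*) - g(\cC') = \frac{\beta_{j_0}}{B}\left[(D_{ij_0} - \bar D') - \frac{Q\sqrt{B}}{\sqrt{B'}(\sqrt{B}+\sqrt{B'})}\right],
\]
so it suffices to show the bracket is strictly positive. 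Because $j_0$ attains the maximum distance on $\cC^* \setminus \{i\}$, one has $\bar D' \leq D_{ij_0}(B' - \beta_i)/B'$ and hence $D_{ij_0} - \bar D' \geq \beta_i D_{ij_0}/B'$. For the penalty term, $\sqrt{B} \geq \sqrt{B'}$ gives $\sqrt{B'}(\sqrt{B}+\sqrt{B'}) \geq 2B'$, whence $Q\sqrt{B}/(\sqrt{B'}(\sqrt{B}+\sqrt{B'})) \leq Q\sqrt{B}/(2B')$. Combining these two bounds, the bracket is at least $(\beta_i D_{ij_0} - Q\sqrt{B}/2)/B'$, which is strictly positive because $D_{ij_0} > Q/(2\beta_i)$ together with $\sqrt{B} \leq 1$ gives $\beta_i D_{ij_0} > Q/2 \geq Q\sqrt{B}/2$. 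This contradicts the optimality of $\cC^*$, establishing the corollary; Corollary~\ref{crl:special}(2) then follows as the special case where \emph{every} $j \neq i$ has $D_{ij} > D_\text{thr}$, forcing $\cC^* = \{i\}$.

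The main obstacle is the choice of client to remove. An \emph{arbitrary} $j_0$ with $D_{ij_0} > D_\text{thr}$ need not work: $\bar D'$ could exceed $D_{ij_0}$ if an even farther client remains in $\cC'$, rendering the bracket non-positive. Taking $j_0$ to be the farthest member of $\cC^*$ is the key step, as it forces $\bar D' \leq D_{ij_0}(1 - \beta_i/B')$ and supplies the decisive gap $\beta_i D_{ij_0}/B'$ that dominates the generalization term precisely at the threshold $Q/(2\beta_i)$.
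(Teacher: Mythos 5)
Your proof is correct, and it follows the same high-level strategy as the paper — derive the per-coalition form of the bound (quantity term $\propto 1/\sqrt{B}$, shift term a $\beta$-weighted average of distances) and then contradict optimality by exhibiting a strictly better sub-coalition — but the removal scheme and the key estimate differ. The paper removes \emph{all} above-threshold clients at once, comparing $\cC$ with $\cC^- = \{k \in \cC : D(\cD_i,\cD_k) \le D_{\text{thr}}\} \cup \{i\}$, and lower-bounds the decrease of the shift term by replacing every retained distance (from above) and every removed distance (from below) by $D_{\text{thr}}$; you instead remove only the single \emph{farthest} client $j_0$ and use its maximality to get $D_{ij_0} - \bar D' \ge \beta_i D_{ij_0}/B'$, which is exactly the extra slack that makes a one-client removal suffice (as you correctly note, removing an arbitrary far client would not work). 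Both arguments then control the growth of the generalization term the same way, via an algebraic bound anchored at the total quantity (your $\sqrt{B} \le 1$ plays the role of the paper's $m_{\cC}, m_{\cC^-} \le m$), and both reduce to the same threshold identity $2 m_i D_{\text{thr}} = \mu\sqrt{m}$, i.e.\ $\beta_i D_{\text{thr}} = Q/2$. What each buys: the paper's one-shot purge directly shows the threshold-clean coalition $\cC^-$ is strictly better, giving a concrete improved coalition in a single comparison; your local-exchange version is a slightly sharper statement — even dropping one farthest member already strictly improves the bound — which is closer in spirit to the single-move updates of the greedy optimizer, at the cost of needing the argmax selection. Your final remark that Corollary 3.2(2) follows by taking every $j \neq i$ beyond $D_{\text{thr}}$ matches the paper's treatment.
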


\begin{proof}
    For the coalition $\cC$, if there is at least one client $j \in \cC$ with $\cD(\cD_i, \cD_j) > D_{\text{thr}} > 0$, we show that there exists a different coalition $\cC^- = \{k \in \cC: \cD(\cD_i, \cD_k) \leq D_{\text{thr}}\}$ with strictly smaller error bound. 

    We denote $\cC^+ = \{k \in \cC, k \neq i: \cD(\cD_i, \cD_k) > D_{\text{thr}}\}$ as the clients in the coalition with distribution differences strictly larger than $D_{\text{thr}}$, and $\cC^- = \{k \in \cC, k \neq i: \cD(\cD_i, \cD_k) \leq D_{\text{thr}}\} \cup \{i\}$ as the clients in the coalition with distribution differences smaller than or equal to $D_{\text{thr}}$ (including client $i$ itself). Notice that 
    \begin{itemize}[itemsep=0pt,topsep=0pt]
        \item $\cC = \cC^+ \cup \cC^-$, 
        \item $\cC^- \subsetneqq \cC, \cC^+ \neq \emptyset$, and 
        \item $i \in \cC^-$. 
    \end{itemize}
    Therefore, $\cC^-$ is a different coalition for client $i$. 
    Next, we prove that coalition $\cC^-$ has a strictly smaller error bound than $\cC$. For clarity, we denote $m_\cC = \sum_{j \in \cC} m_j$ and $\mu = \sqrt{2d \log (2m + 2) + \log (4  / \delta)}$. We first quantify the error bound for $\cC$. 
    \begin{align*}
        \text{error\_bound} (\cC) &= \epsilon_i(h_i^*) + 2 \mu \sqrt{\frac{1}{m} \sum_{j\neq i} \frac{\alpha_{ij}^2}{\beta_j} } + 2 \sum_{j\neq i} \alpha_{ij} D(\cD_i, \cD_j) \\
        &= \epsilon_i(h_i^*) + 2 \mu \sqrt{\frac{1}{m} \sum_{j \in \cC} \frac{ \left( \frac{\beta_j}{\sum_{k \in \cC} \beta_k}\right)^2 }{\beta_j}} + 2 \sum_{j \in \cC - \{i\}}  \frac{\beta_j}{\sum_{k \in \cC} \beta_k} D(\cD_i, \cD_j) \\
        &= \epsilon_i(h_i^*) + 2 \mu \sqrt{\frac{1}{m} \sum_{j \in \cC} \frac{\beta_j}{\left( \sum_{k \in \cC} \beta_k \right)^2}} + 2 \sum_{j \in \cC - \{i\}}  \frac{\beta_j}{\sum_{k \in \cC - \{i\}} \beta_k} D(\cD_i, \cD_j) \\
        &= \epsilon_i(h_i^*) + 2 \mu \sqrt{\frac{1}{\sum_{j \in \cC} m_j}} + 2 \sum_{j \in \cC - \{i\}}  \frac{m_j}{\sum_{k \in \cC} m_k} D(\cD_i, \cD_j) \\
        &= \epsilon_i(h_i^*) + 2 \mu \sqrt{\frac{1}{m_\cC}} + 2 \sum_{j \in \cC - \{i\}}  \frac{m_j}{m_\cC} D(\cD_i, \cD_j) 
    \end{align*}
    We can quantify the error bound for $\cC^-$ through the same steps. Then we can compare two error bounds. 
    \begin{align*}
        &\quad\ \ \text{error\_bound} (\cC) - \text{error\_bound} (\cC^-) \\
        &= \left(2 \mu \sqrt{\frac{1}{m_\cC}} + 2 \sum_{j \in \cC - \{i\}}  \frac{m_j}{m_\cC} D(\cD_i, \cD_j)\right) - \left(2 \mu \sqrt{\frac{1}{m_{\cC^-}}} + 2 \sum_{j \in \cC^- - \{i\}}  \frac{m_j}{m_{\cC^-}} D(\cD_i, \cD_j)\right) \\
        &= - 2 \mu \left( \sqrt{\frac{1}{m_{\cC^-}}} - \sqrt{\frac{1}{m_\cC}} \right) + 2 \left( \sum_{j \in \cC - \{i\}}  \frac{m_j}{m_\cC} D(\cD_i, \cD_j) - \sum_{j \in \cC^- - \{i\}}  \frac{m_j}{m_{\cC^-}} D(\cD_i, \cD_j)  \right)
    \end{align*}
    In the first term, 
    \begin{align*}
        \sqrt{\frac{1}{m_{\cC^-}}} - \sqrt{\frac{1}{m_\cC}} &= \frac{\frac{1}{m_{\cC^-}} - \frac{1}{m_\cC}}{\sqrt{\frac{1}{m_{\cC^-}}} + \sqrt{\frac{1}{m_\cC}}} \\
        &< \frac{\frac{1}{m_{\cC^-}} - \frac{1}{m_\cC}}{\sqrt{\frac{1}{m}} + \sqrt{\frac{1}{m}}} \\
        &= \frac{\sqrt{m}}{2} \left( \frac{1}{m_{\cC^-}} - \frac{1}{m_\cC} \right)
    \end{align*}

    In the second term, 
    \begin{align*}
        & \quad \sum_{j \in \cC - \{i\}}  \frac{m_j}{m_\cC} D(\cD_i, \cD_j) - \sum_{j \in \cC^- - \{i\}}  \frac{m_j}{m_{\cC^-}} D(\cD_i, \cD_j) \\
        &= \sum_{j \in \cC^- - \{i\}} \left(\frac{m_j}{m_\cC} - \frac{m_j}{m_{\cC^-}}\right) D(\cD_i, \cD_j) + \sum_{j \in \cC^+} \frac{m_j}{m_\cC} D(\cD_i, \cD_j) \\
        &> \sum_{j \in \cC^- - \{i\}} \left(\frac{m_j}{m_\cC} - \frac{m_j}{m_{\cC^-}}\right) D_{\text{thr}} + \sum_{j \in \cC^+} \frac{m_j}{m_\cC}D_{\text{thr}} \\
        &= \left(\sum_{j \in \cC - \{i\}}\frac{m_j}{m_\cC} - \sum_{j \in \cC^- - \{i\}}\frac{m_j}{m_\cC^-} \right) D_{\text{thr}} \\
        &= \left( \frac{1}{m_\cC^-} - \frac{1}{m_\cC} \right) m_i D_{\text{thr}}
    \end{align*}

    Put them together, we have
    \begin{align*}
        \text{error\_bound} (\cC) - \text{error\_bound} (\cC^-) &> -2 \mu \frac{\sqrt{m}}{2} \left( \frac{1}{m_{\cC^-}} - \frac{1}{m_\cC} \right) + 2 \left( \frac{1}{m_\cC^-} - \frac{1}{m_\cC} \right) m_i D_{\text{thr}} \\
        &= \left( \frac{1}{m_{\cC^-}} - \frac{1}{m_\cC} \right) \left( 2 m_i D_{\text{thr}} - \mu \sqrt{m} \right) \\
        &= 0
    \end{align*}
    Therefore, the coalition $\cC^-$ has a strictly smaller error bound than $\cC$. 
\end{proof}

\setcounter{maintheorem}{3}

\begin{maincorollary}[2] \label{crl:special:2}
When using VC-dimension bound (\ref{def:vc2}) as the quantity aware function, if $\min_{j \neq i} D(\cD_i, \cD_j) > \frac{\sqrt{2d \log (2m + 2) + \log (4  / \delta)}\sqrt{m}}{2 m_i}$, where $d$ is the VC-dimension of the hypothesis space, local training minimizes the error bound with $\alpha_{ii} = 1$ and $\alpha_{ij} = 0, \forall j \neq i$. 
\end{maincorollary}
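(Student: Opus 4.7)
The plan is to derive this special-case result as an immediate consequence of Corollary \ref{crl:general2}, which was just established. The threshold $D_\text{thr}$ in the present hypothesis is exactly the admissibility cutoff that appears in Corollary \ref{crl:general2}, so once this matching is recognized, the conclusion follows by reading off the right implication.

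First I would invoke Corollary \ref{crl:general2}: any coalition $\cC$ minimizing the error bound for client $i$ contains no client $j$ with $D(\cD_i, \cD_j) > D_\text{thr}$. Second, the hypothesis $\min_{j \neq i} D(\cD_i, \cD_j) > D_\text{thr}$ says every $j \neq i$ strictly exceeds the threshold, so the optimal coalition admits no $j \neq i$. Since client $i$ always lies in its own coalition, we conclude $\cC = \{i\}$. Third, I would translate $\cC = \{i\}$ back into combination weights via Eq.\ (\ref{eq:collab_mat}), which immediately gives $\alpha_{ii} = \beta_i/\beta_i = 1$ and $\alpha_{ij} = 0$ for every $j \neq i$ — precisely the local-training configuration claimed in the corollary.

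The main obstacle is essentially nil, since the delicate algebraic comparison between a coalition $\cC$ and its pruned version $\cC^-$ was already carried out in the proof of Corollary \ref{crl:general2}. If one preferred a self-contained argument, the alternative would be to plug the local-training weights into the error bound directly, expand the VC quantity-aware term as in the proof of Corollary \ref{crl:special:1}, and use the hypothesis on the minimum distribution distance together with the convexity of the map $\valpha_i \mapsto \sqrt{\sum_j \alpha_{ij}^2/\beta_j}$ to show that any reallocation of mass away from coordinate $i$ strictly increases the bound. This direct route recovers exactly the algebra that already appears inside the proof of Corollary \ref{crl:general2}, confirming that no new technical machinery is needed beyond what was developed for the general picky-client statement.
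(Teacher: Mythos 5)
Your proposal is correct and follows essentially the same route as the paper, which likewise proves this statement as an immediate special case of Corollary \ref{crl:general2}: since every $j \neq i$ exceeds the threshold $D_\text{thr}$, the optimal coalition reduces to $\{i\}$, i.e.\ local training with $\alpha_{ii}=1$ and $\alpha_{ij}=0$ for $j\neq i$. Your translation back through Eq.\ (\ref{eq:collab_mat}) is a slightly more explicit finishing step than the paper gives, but the argument is the same.
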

\begin{proof}
    It is a special case for Corollary \ref{crl:general2}. Since $\forall i\neq j, D(\cD_i, \cD_j) > \frac{\sqrt{2d \log (2m + 2) + \log (4  / \delta)}\sqrt{m}}{2 m_i}$, each client's coalition should only include itself, which results in local training. 
\end{proof}

\subsection{Derivation of Client Discriminator} \label{eq:derive}

\begin{align*}
    D(\cD_i, \cD_j) &= \max_{h \in \cH} | \epsilon_i(h) - \epsilon(h) | \\
    &=\max_{h \in \cH} \left| \bbE_{(\vx, \vy) \in \cD_i} \ell(h(\vx), \vy) - \bbE_{(\vx, \vy) \in \cD_j} \ell(h(\vx), \vy) \right| \\
    &= \max_{f \in \cF} \left| \bbE_{(\vx, \vy) \in \cD_i} f(\vx, \vy) - \bbE_{(\vx, \vy) \in \cD_j} f(\vx, \vy) \right| \\
    &= \max_{f \in \cF} \left| \Pr_{(\vx, \vy) \in \cD_i} [f(\vx, \vy) = 1] - \Pr_{(\vx, \vy) \in \cD_j} [f(\vx, \vy) = 1] \right| \\
    &= \max_{f \in \cF} \left| \Pr_{(\vx, \vy) \in \cD_i} [f(\vx, \vy) = 1] +  \Pr_{(\vx, \vy) \in \cD_j} [f(\vx, \vy) = 0] - 1 \right| \\
    &= \max_{f \in \cF} \left| 2 \cdot \text{BalAcc} (f, \{\cD_i, 1\} \cup \{\cD_j, 0\}) - 1\right| 
\end{align*}
where 
\begin{align*}
    \text{BalAcc} (f, \{\cD_i, 1\} \cup \{\cD_j, 0\}) = \frac{1}{2} \left( \Pr_{(\vx, \vy) \in \cD_i} [f(\vx, \vy) = 1] +  \Pr_{(\vx, \vy) \in \cD_j} [f(\vx, \vy) = 0]  \right)
\end{align*}
is the balanced accuracy. 

\newpage
\section{Additional Experiments}

\subsection{Setup}

Here we provide more information on our experimental settings. Table \ref{tab:quantity} show the statistics of training/testing samples in three scenarios we consider. Notice that the testing data is NOT used during collaboration structure optimization or FL model training.
\begin{table*}[h]
\setlength{\tabcolsep}{0.9mm}{
\caption{Number of training / testing samples on each client \label{tab:quantity}}
\vskip 2mm
\begin{center}
\begin{small}
\begin{tabular}{l|c|c|c}
\toprule
Client & Label Shift (FashionMNIST) & Feature Shift (CIFAR-10) & Concept Shift (CIFAR-100) \\
\midrule
``Large'' (0-9) & 2,100 / 350 & 2,500 / 500  & 2,500 / 500 \\
``Small'' (10-19) & 14 / 350 & 340 / 500 & 120 / 500 \\
\bottomrule
\end{tabular}
\end{small}
\end{center}
}
\vskip -0.1in
\end{table*}

We run $C = 6, 8, 10$ on all three settings, and report the best result. Finally, we choose $C = 10$ for FashionMNIST and CIFAR-10 experiments, and $C = 8$ for CIFAR-100 experiments. 

Our code is available at \url{https://github.com/baowenxuan/FedCollab}.


\subsection{New Training Clients (RQ5)} \label{subsec:newclient}
\label{sec:exp_new}

In this part, we study whether new training clients can contribute to the FL system with a collaboration structure solved by {\algname}. We initialize a FedAvg system with 19 clients, leaving client 0 out. Client 0 operates local training in the first 200 rounds, and joins the FL system after 200 rounds (when FL models nearly converge). We use {\algname} to decide which coalition it should join. As shown in Figure \ref{fig:newc}, client 0 (``new'') receives a FL model with higher accuracy than the local model. Meanwhile, clients in the updated coalition (``clustered'') also benefit from the new training client since the FL model has additional performance gain after the new client 0 joins the training. 

\begin{figure*}[h]
\begin{center}
\centerline{\includegraphics[width=0.45\columnwidth]{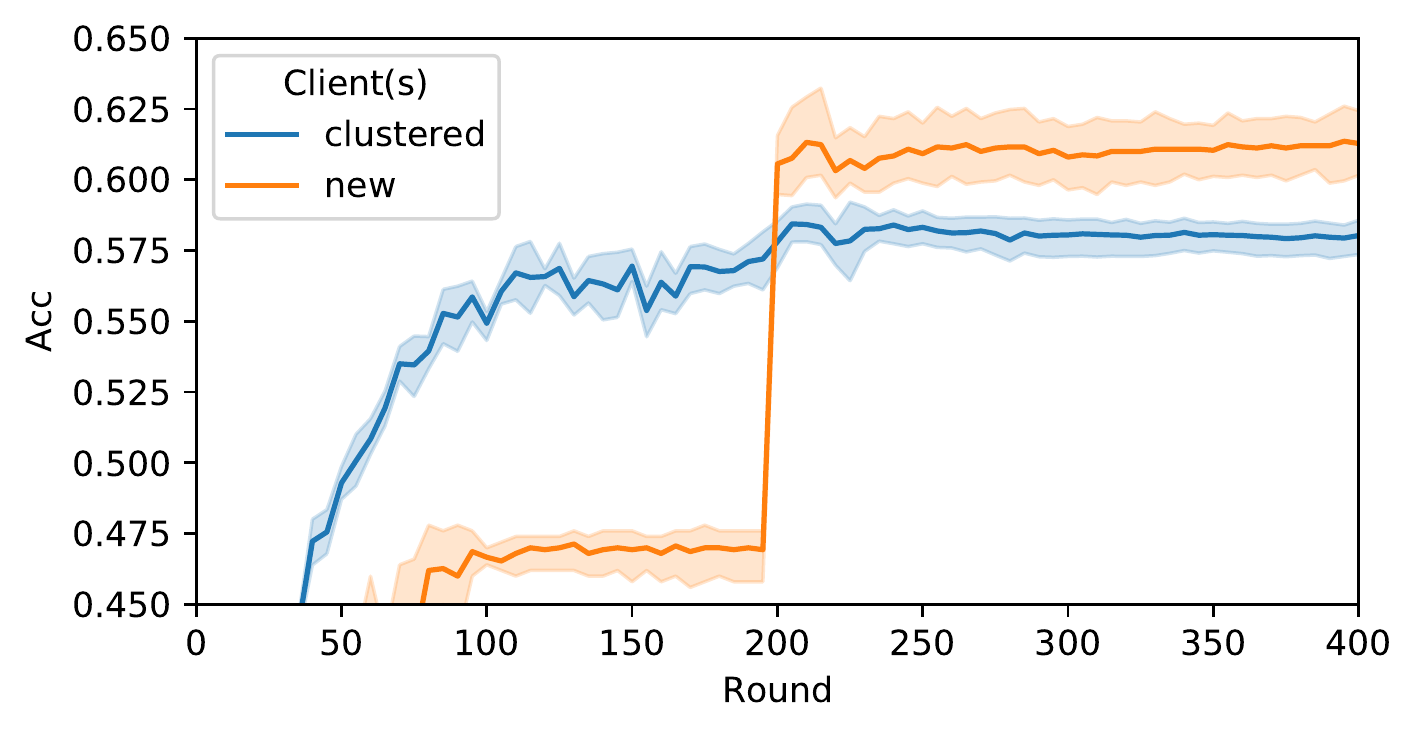}}
\caption{Utilizing new training clients on CIFAR-10 with feature shift}
\label{fig:newc}
\end{center}
\end{figure*}


\subsection{Convergence of {\algname} solver (RQ6)} \label{subsec:converge}

Algorithm \ref{alg:opt} is theoretically guaranteed to converge. In this part, we further study how many iterations it needs to converge, and whether it falls into local optima. Figure \ref{fig:convergence} visualizes the result of convergence. We re-run the {\algname} solver 100 times with different random seeds, and plot all trajectories indicating how the {\algname} loss changes w.r.t. inner iterations (line 3-6 in Algorithm \ref{alg:opt}). Notice that the {\algname} loss is evaluated for $N = 20$ times in each inner iteration. 

All the random runs converge within the first 60 inner iterations, while stopping within the first 80 inner iterations (since it requires one additional outer iteration to confirm convergence). Since the evaluation of {\algname} loss is very efficient, it only takes around 100ms to run Algorithm \ref{alg:opt} once. 

We also notice that a single run of Algorithm \ref{alg:opt} cannot guarantee the optimal solution. In many runs, {\algname} solver converges to a sub-optimal solution, which gives a collaboration structure of [[0..4], [5..9], [10..14], [15..19]]. Therefore, we use multiple random runs to refine the collaboration structure. 

\begin{figure}[h]
    \centering
    \subfigure[\# inner iteration v.s. loss value]{
    \includegraphics[width=0.45\columnwidth]{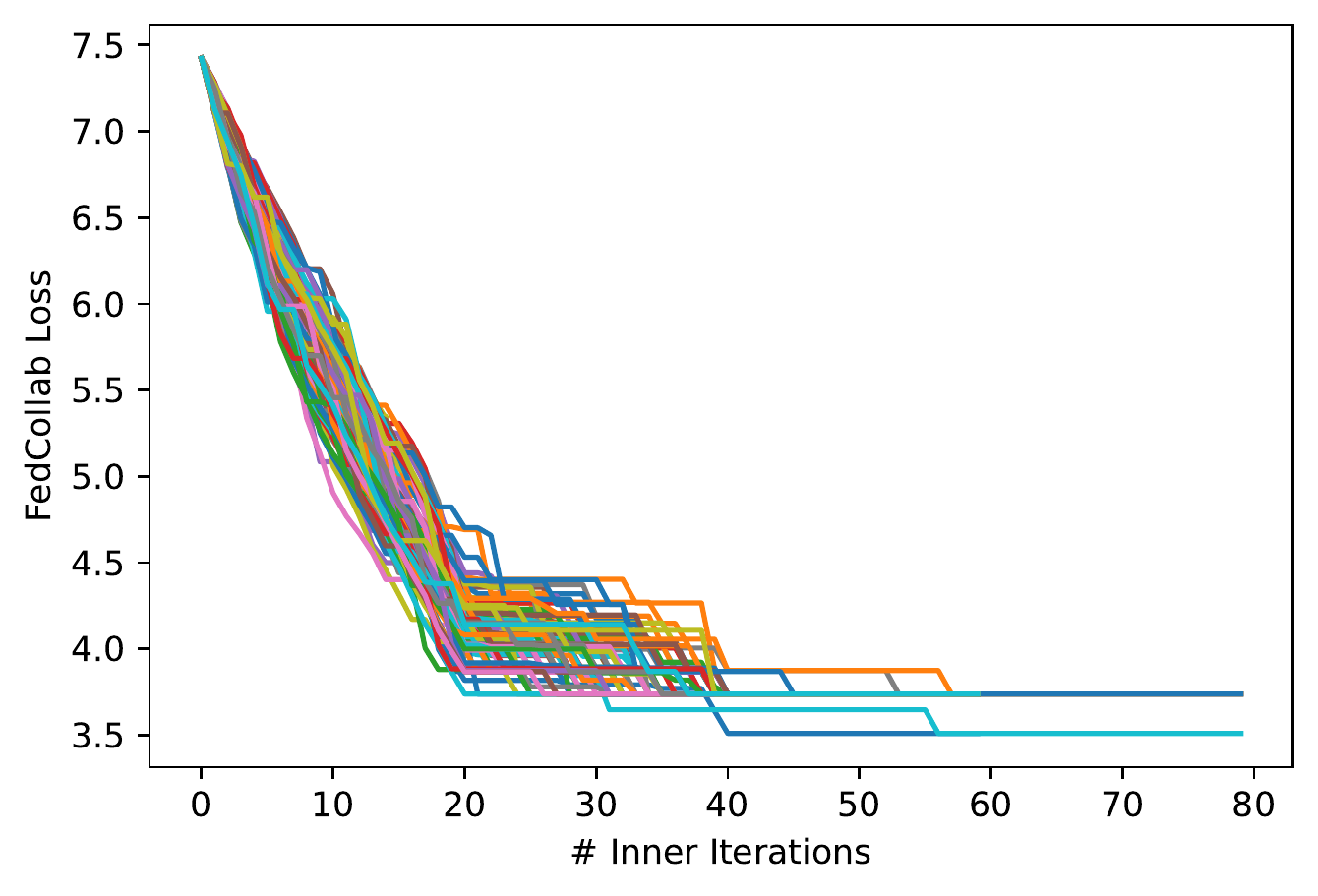}
    }
    \centering
    \subfigure[Histogram of \# inner iteration to converge]{
    \includegraphics[width=0.45\columnwidth]{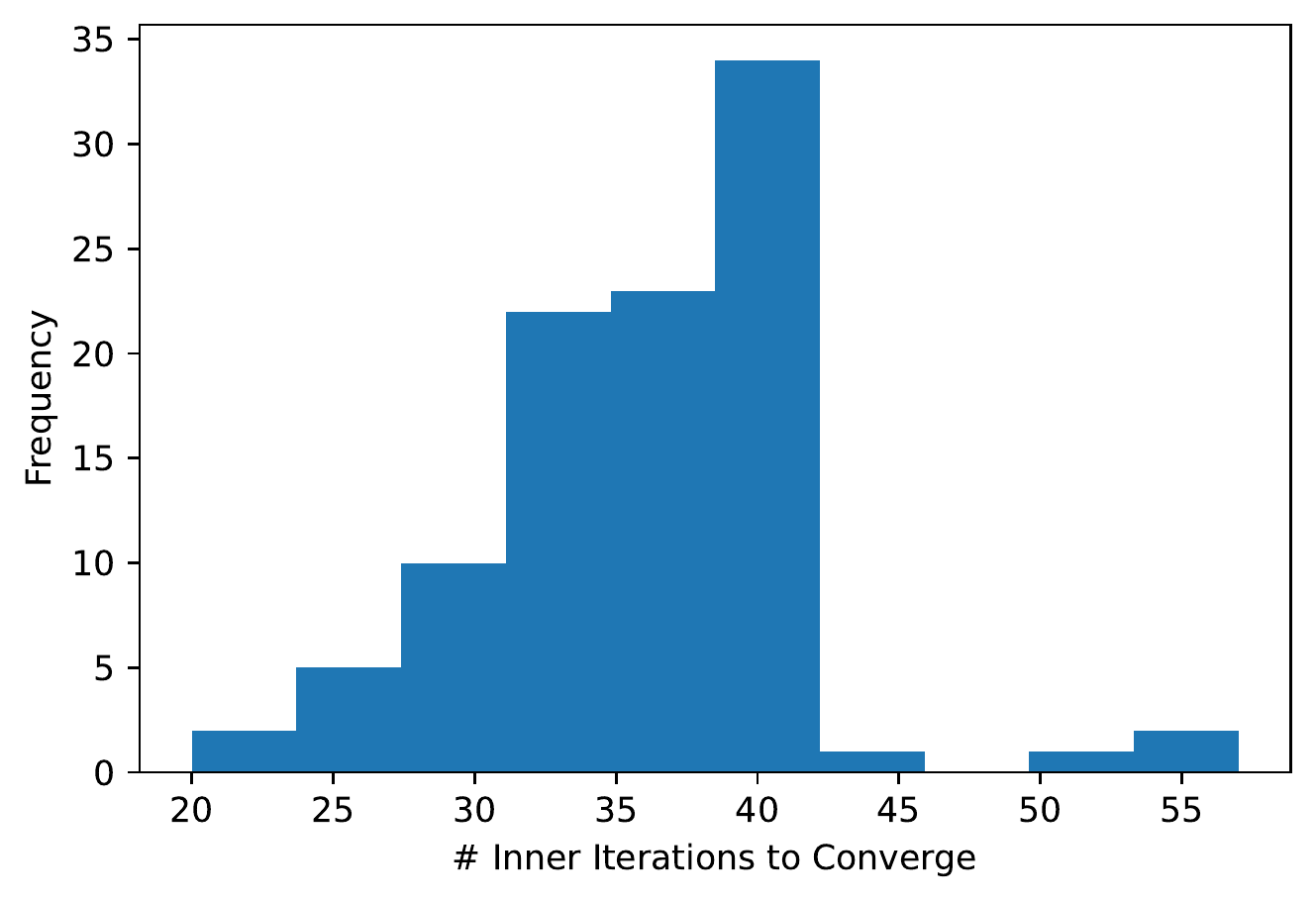}
    }
    \centering
    \caption{Convergence of Algorithm \ref{alg:opt} on CIFAR-10}
    \label{fig:convergence}
\end{figure}

\subsection{Computation and Communication Complexity}\label{subsec:complexity}

During the clustering step, {\algname} trains $\frac{N(N-1)}{2}$ client discriminators for $N$ clients, introducing $\cO(N^2)$ complexity. The complexity of {\algname} has the same order as many cross-silo FL algorithms, including MOCHA \cite{mocha}, FedFOMO \cite{fedfomo}, and PACFL \cite{pacfl}, which all model pairwise relationship among clients. Meanwhile, the training of client discriminators can be conducted in parallel: each client can train $N-1$ discriminators in parallel with other clients. 

\begin{table}[h!]
\caption{Comparison of computation and communication complexities (CIFAR-100 experiment) \label{tab:complexity}}
\begin{center}
\begin{small}
\vspace{2mm}
\begin{tabular}{lrr}
\toprule
Model & MACs & Params \\
\midrule
Client discriminator (MLP) & 104,600 & 105,001 \\
FL model (ResNet-18) & 37,220,352 & 11,181,642\\
\bottomrule
\end{tabular}
\end{small}
\end{center}
\end{table}

In the paper, to reduce the computation and communication constraints, we use a lightweight 2-layer MLP as the client discriminator, which is very efficient compared to the FL model (ResNet-18). We numerically evaluate their computation and communication complexities in the CIFAR-100 experiment. 
\begin{itemize}[itemsep=0pt,topsep=0pt]
    \item For computation cost, we count the number of multiply-add cumulations (MACs) for the forward pass of a single data point. 
    \item For communication cost, we count the number of parameters (Params). 
\end{itemize}
As shown in Table \ref{tab:complexity}, the MACs and Params for client discriminator are negligible compared to the FL model. Considering that each client needs to train $N-1 = 19$ client discriminators in total, our clustering step still only introduces $\sim 5.3 \%$ additional computation cost and $\sim 17.8 \%$ additional communication cost for each client.

\subsection{Discussion of Clustering During or before FL}\label{subsec:cluster_fl}

Compared to clustering during FL, clustering before FL has the following \textit{advantages}.

\begin{itemize}[itemsep=0pt,topsep=0pt]
    \item Clustering before FL is more stable and efficient. IFCA and FeSEM conduct clustering during FL. Their clustering results are influenced by the random initialization, and can easily converge to suboptima. To jump out of local optima, IFCA must conduct the whole FL training for multiple times, which is very inefficient. In comparison, {\algname} does not rely on any random initialization of the collaboration structure, and can refine the collaboration structure within only a few seconds. 
    \item Clustering before FL is more flexible. For clustering before FL, the clustering and FL phases are disentangled, which allows them to be seamlessly integrated with any GFL or PFL algorithms. Meanwhile, the convergence of clustering during FL algorithms may depend on specific FL algorithm. 
    \item Clustering before FL saves communication cost for outliers. For outlier clients that have significantly different distribution from all other clients, {\algname} allows them to form self-clusters, and they do not need to participate in the FL phase anymore (see blue cluster in Figure \ref{fig:overview}). It saves communication cost for these outlier clients and the server. It also prevents other clients from being negatively affected by outlier clients. 
\end{itemize}

For \textit{disadvantages}, clustering before FL requires each client's data set to be stable. In other words, the same client data set is used for clustering and FL. If the data sets for clustering and FL have different distributions or quantities, the optimal collaboration during clustering may not also be optimal for FL. However, this requirement is automatically satisfied for clustering during FL. 


\end{document}